
\documentclass{article}

\usepackage{amsthm,amsmath,amsfonts,amssymb}
\usepackage{float}
\newtheorem{definition}{Definition}[section]
\newtheorem{theorem}{Theorem}[section]

\newtheorem{lemma}[theorem]{Lemma}
\newtheorem{prop}[theorem]{Proposition}
\usepackage{natbib}
\usepackage{microtype}
\usepackage{subcaption}
\usepackage{graphicx}
\usepackage{dblfloatfix}
\usepackage{booktabs} 

\usepackage{paralist}
\usepackage{tikz}
\usetikzlibrary{cd}

\DeclareMathOperator{\tr}{tr}
\DeclareMathOperator*{\argmin}{arg\,min}

\usepackage{hyperref}



\usepackage[accepted]{icml2019}

\icmltitlerunning{Loss Landscapes of Regularized Linear Autoencoders}

\begin{document}

\twocolumn[
\icmltitle{Loss Landscapes of Regularized Linear Autoencoders}



\icmlsetsymbol{equal}{*}

\begin{icmlauthorlist}
\icmlauthor{Daniel Kunin}{equal,stanford}
\icmlauthor{Jonathan M. Bloom}{equal,broad}
\icmlauthor{Aleksandrina Goeva}{broad}
\icmlauthor{Cotton Seed}{broad}
\end{icmlauthorlist}

\icmlaffiliation{stanford}{Institute for Computational and Mathematical Engineering, Stanford University, Stanford, California, USA}

\icmlaffiliation{broad}{Broad Institute of MIT and Harvard, Cambridge, Massachusetts, USA}

\icmlcorrespondingauthor{Daniel Kunin}{kunin@stanford.edu}
\icmlcorrespondingauthor{Jonathan M. Bloom}{jbloom@broadinstitute.org}

\icmlkeywords{linear, autoencoder, regularization, orthogonality, critical point, loss function, loss surface, loss landscape, principal components analysis, singular value decomposition, grassmannian, morse thoery, morse homology, perfect morse function}

\vskip 0.3in]



\printAffiliationsAndNotice{\icmlEqualContribution} 

\begin{abstract}
Autoencoders are a deep learning model for representation learning. When trained to minimize the distance between the data and its reconstruction, linear autoencoders (LAEs) learn the subspace spanned by the top principal directions but cannot learn the principal directions themselves. In this paper, we prove that $L_2$-regularized LAEs are symmetric at all critical points and learn the principal directions as the left singular vectors of the decoder. We smoothly parameterize the critical manifold and relate the minima to the MAP estimate of probabilistic PCA. We illustrate these results empirically and consider implications for PCA algorithms, computational neuroscience, and the algebraic topology of learning.
\end{abstract}

\section{Introduction}
\label{introduction}

Consider a data set consisting of points $x_1, \ldots, x_n$ in $\mathbb{R}^m$. Let $X \in \mathbb{R}^{m \times n}$ be the data matrix with columns $x_i$. We will assume throughout that $k \le\min\{m,n\}$ and that the singular values of $X$ are positive and distinct.

An \textit{autoencoder} consists of an encoder $f: \mathbb{R}^m \to \mathbb{R}^k$ and decoder $g: \mathbb{R}^k \to \mathbb{R}^m$; the latter maps the latent representation $f(x_i)$ to the reconstruction $\hat{x}_i = g(f(x_i))$ \citep{goodfellow}. The full network is trained to minimize reconstruction error, typically the squared Euclidean distance between the dataset $X$ and its reconstruction $\hat{X}$ (or equivalently, the Frobenius norm of $X - \hat{X}$). When the activations of the network are the identity, the model class reduces to that of one encoder layer $W_1 \in \mathbb{R}^{k \times m}$ and one decoder layer $W_2 \in \mathbb{R}^{m \times k}$. We refer to this model as a \textit{linear autoencoder} (LAE) with loss function defined by

$$\mathcal{L}(W_1, W_2) = ||X - W_2 W_1 X||_F^2.$$

Parameterizing $\mathcal{L}$ by the product $W = W_2 W_1$, the Eckart-Young Theorem \citep{young} states that the optimal $W$ orthogonally projects $X$ onto the subspace spanned by its top $k$ \textit{principal directions}\footnote{The principal directions of $X$ are the eigenvectors of the $m \times m$ covariance of $X$ in descending order by eigenvalue, or equivalently the left singular vectors of the mean-centered $X$ in descending order by (squared) singular values.}.

Without regularization, LAEs learn this subspace but cannot learn the principal directions themselves due to the symmetry of $\mathcal{L}$ under the action of the group $\mathrm{GL}_k(\mathbb{R})$ of invertible $k \times k$ matrices defined by $(W_1, W_2) \mapsto (G W_1, W_2 G^{-1})$:

\begin{equation}
\label{symg}
X - (W_2 G^{-1})(G W_1) X = X - W_2 W_1X.
\end{equation}

Indeed, $\mathcal{L}$ achieves its minimum value on a smooth submanifold of $\mathbb{R}^{k \times m} \times \mathbb{R}^{m \times k}$ diffeomorphic to $\mathrm{GL}_k(\mathbb{R})$; the learned latent representation is only defined up to deformation by invertible linear maps; and the $k$-dimensional eigenspace of $W$ with eigenvalue one has no preferred basis.

In light of the above, the genesis of this work was our surprise at the theorem\footnote{The theorem was retracted following our correspondance.} and empirical observation in \citet{plaut} that the principal directions of $X$ are recovered from a trained LAE as the left singular vectors of the decoder (or as the right singular vectors of the encoder). We realized by looking at the code that training was done with the common practice of $L_2$-regularization:

$$\mathcal{L}_\sigma(W_1, W_2) = \mathcal{L}(W_1, W_2) + \lambda\left(||W_1||^2_F + ||W_2||^2_F\right).$$

In this paper, we prove that LAEs \textit{with} $L_2$-regularization do in fact learn the principal directions in this way, while shrinking eigenvalues in the manner of probabilistic PCA \citep{bishop99}. The key idea is that regularization reduces the symmetry group from $\mathrm{GL}_k(\mathbb{R})$ to the orthogonal group $\mathrm{O}_k(\mathbb{R})$, which preserves the structure of SVD. We further prove that the encoder and decoder are transposes at all critical points, with implications for whether the brain could plausibly implement error backpropagation.

\begin{figure*}[t]
\label{vectorfields}
\vskip 0.2in
\begin{center}
\begin{subfigure}{0.49\columnwidth}
  \centerline{
  \includegraphics[width=\linewidth]{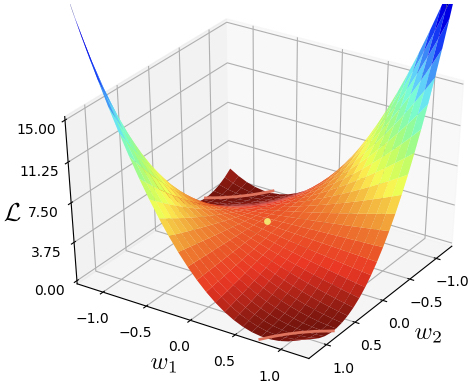}
  }
  \caption{Unregularized}
\end{subfigure}
\begin{subfigure}{0.49\columnwidth}
  \centerline{
  \includegraphics[width=\linewidth]{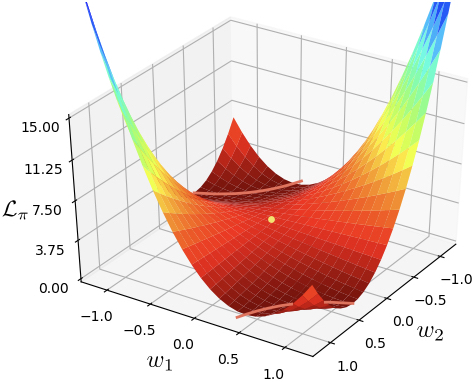}
  }
  \caption{Product ($\lambda = 2$)}
\end{subfigure}
\begin{subfigure}{0.49\columnwidth}
  \centerline{
  \includegraphics[width=\linewidth]{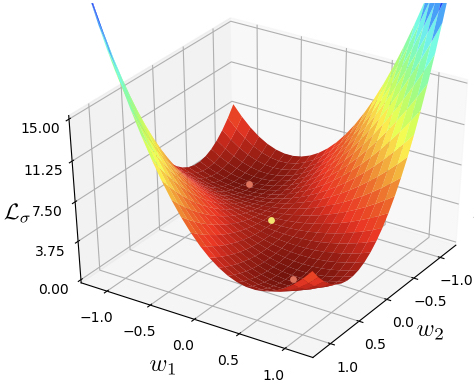}
  }
  \caption{Sum ($\lambda = 2$)}
\end{subfigure}
\begin{subfigure}{0.49\columnwidth}
  \centerline{
  \includegraphics[width=\linewidth]{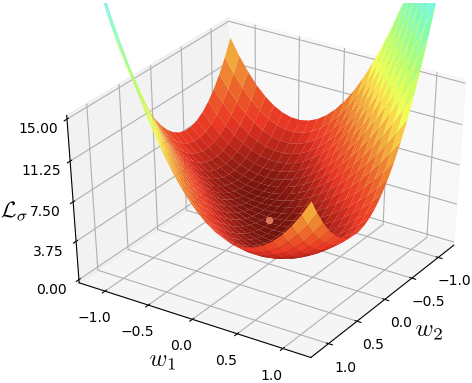}
  }
  \caption{Sum ($\lambda = 4$)}
\end{subfigure}
\caption{Scalar loss landscapes with $x^2=4$. Yellow points are saddles and red curves and points are global minima.}
\label{one dimensional}
\end{center}
\vskip -0.2in
\end{figure*}

\subsection{Related work}

Building on the original work of \citet{young} on low-rank matrix approximation, \citet{izenman} demonstrated a connection between a rank-reduced regression model similar to an LAE and PCA. \citet{bourlard} characterized the minima of an unregularized LAE; \citet{baldi1989} extended this analysis to all critical points.

Several studies of the effect of regularization on LAEs have emerged of late. The rank-reduced regression model was extended in \citet{mukherjee} to the study of rank-reduced ridge regression. A similar extension of the LAE model was given in \citet{josse}. An in depth analysis of the linear denoising autoencoder was given in \citet{pretorius} and most recently, \citet{mianjy} explored the effect of dropout regularization on the minima of an LAE.

While $L_2$-regularization is a foundational technique in statistical learning, its effect on autoencoder models has not been fully characterized. Recent work of \citet{mehta} on $L_2$-regularized deep linear networks applies techniques from algebraic geometry to highlight how algebraic symmetries result in ``flat" critical manifolds and how $L_2$-regularization breaks these symmetries to produce isolated critical points. We instead apply elementary linear algebra, with intuition from algebraic topology, to completely resolve dynamics in the special case of LAEs.

\subsection{Our contributions}

The contributions of our paper are as follows. 
\begin{compactitem}
    \item In Section \ref{secloss} we consider LAEs with (i) no regularization, (ii) $L_2$-regularization of the composition of the encoder and decoder, and (iii) $L_2$-regularization of the encoder and decoder separately as in $\mathcal{L}_\sigma$. We build intuition by analyzing the scalar case, consider the relationship between regularization and orthogonality, and deduce that the encoder and decoder are transposes at all critical points of $\mathcal{L}_\sigma$.
    \item In Section \ref{bayesian models}, we realize all three LAE models as generative processes, most notably relating the minimum of $\mathcal{L}_\sigma$ and the MAP estimate of  probabilistic PCA.
    \item In Section \ref{seccrit}, we characterize all three loss landscapes. To build intuition, we first leave the overparameterized world of coordinate representations to think geometrically about the squared distance from a plane to a point cloud. We expand on this topological viewpoint in Appendix \ref{morse}.
    \item In Section \ref{secemp}, we illustrate these results empirically, with all code and several talks available on \href{https://github.com/danielkunin/Regularized-Linear-Autoencoders}{GitHub}\footnote{\href{https://github.com/danielkunin/Regularized-Linear-Autoencoders}{github.com/danielkunin/Regularized-Linear-Autoencoders}}.
    \item In Section \ref{secdisc}, we discuss implications for eigendecomposition algorithms, computational neuroscience, and deep learning.
\end{compactitem}
The connections we draw between regularization and orthogonality, LAEs and probabilistic PCA, and the topology of Grassmannians are novel and provide a deeper understanding of the loss landscapes of regularized linear autoencoders.

\section{Regularized LAEs}
\label{secloss}

In the Appendix \ref{bias and mean centering}, we provide a self-contained derivation of the fact that an LAE with bias parameters is equivalent to an LAE without bias parameters trained on mean-centered data \citep{bourlard}. So without loss of generality, we assume $X$ is mean centered and consider the following three LAE loss functions for fixed $\lambda > 0$:
\begin{align*}
    \mathcal{L}(W_1,W_2) &= ||X - W_2W_1X||_F^2 \\
    \mathcal{L}_\pi(W_1,W_2) &= \mathcal{L}(W_1,W_2) + \lambda||W_2W_1||_F^2\\
    \mathcal{L}_\sigma(W_1,W_2) &= \mathcal{L}(W_1,W_2) + \lambda(||W_1||_F^2 + ||W_2||_F^2)
\end{align*}
We call these the \textbf{unregularized}, \textbf{product}, and \textbf{sum} losses, respectively.

The product and sum losses mirror the loss functions of a linear \textit{denoising autoencoder} (DAE) \citep{vincent} and linear \textit{contractive autoencoder} (CAE) \citep{rifai} respectively. See Appendix \ref{denoising} for details.


\subsection{Visualizing LAE loss landscapes}
\label{sec1d}

We can visualize these loss functions directly in the case $n = m = k = 1$, as shown in Figure \ref{one dimensional}. In fact, working out the critical points in this scalar case led us to conjecture the general result in Section $\ref{seccrit}$. We invite the reader to enjoy deriving the following results and experimenting with these loss landscapes using our online \href{https://danielkunin.github.io/Regularized-Linear-Autoencoders/}{visualization tool}.

For all three losses, the origin $w_1 = w_2 = 0$ is the unique rank-0 critical point. For $\mathcal{L}$ and $\mathcal{L}_\pi$, the origin is always a saddle point, while for $\mathcal{L}_\sigma$ the origin is either a saddle point or global minimum depending of the value of $\lambda$.

For $\mathcal{L}$, the global minima are rank-1 and consist of the hyperbola\footnote{Identified with the components of $\mathrm{GL}_1(\mathbb{R}) \cong \mathbb{R} \backslash \{0\}$.}
$$w_2 w_1 = 1.$$
For $\mathcal{L}_\pi$, the global minima are rank-1 and consist of this hyperbola shrunk toward the origin as in ridge regression,
$$w_2 w_1 = (1 + \lambda x^{-2})^{-1}.$$ 
For $\mathcal{L}_\sigma$ the critical points depend on the scale of $\lambda$ relative to $x^2$. For $\lambda < x^2$, the origin is a saddle point and the global minima are the two isolated rank-1 critical points\footnote{Identified with the components of $\mathrm{O}_1(\mathbb{R}) \cong \{\pm 1 \}$.} cut out by the equations
$$w_2 w_1 = 1 - \lambda x^{-2}, \quad w_1 = w_2.$$ 
As $\lambda$ increases toward $x^2$, these minima move toward the origin, which remains a saddle point. As $\lambda$ exceeds $x^2$, the origin becomes the unique global minimum. This loss of information was our first hint at the connection to probabilistic PCA formalized in Theorem \ref{thmppca}.

\subsection{Regularization and orthogonality}

Adding $L_2$-regularization to the encoder and decoder separately reduces the symmetries of the loss from $\mathrm{GL}_k(\mathbb{R})$ to the orthogonal group $O_k(\mathbb{R})$. Two additional facts about the relationship between regularization and orthogonality have guided our intuition:

\begin{asparaenum}[(a)]
\item Orthogonal matrices are the determinant $\pm 1$ matrices of minimal Frobenius norm\footnote{Geometrically: the unit-volume parallelotope of minimal total squared side length is the unit hypercube.},
$$\argmin_{A} ||A||^2_F \ \ \ \mathrm{ s.t. } \ \ \det(A)^2 = 1.$$
\item Orthogonal matrices are the inverse matrices of minimum total squared Frobenius norm,
$$\argmin_{A,B} ||A||_F^2 + ||B||_F^2 \ \ \ \mathrm{ s.t. } \ \ AB = I,$$
and in particular $A = B^\intercal$ at all minima.
\end{asparaenum}
Both facts follow from the inequality of arithmetic and geometric means after casting the problems in terms of squared singular values\footnote{The squared Frobenius norm is their sum and the squared determinate is their product.}. While it was not immediately clear to us that the transpose relationship in (b) also holds at the minima of $\mathcal{L}_\sigma$, in fact, all critical points of an L$_2$-regularized linear autoencoder are symmetric:

\begin{theorem}[Transpose Theorem]
\label{orthogonality} All critical points of $\mathcal{L}_\sigma$ satisfy $W_1 = W_2^\intercal.$
\end{theorem}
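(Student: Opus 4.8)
The plan is to obtain the two stationarity equations, read off a ``symmetrized'' identity $W_1W_1^\intercal = W_2^\intercal W_2$ almost for free, and then fold that identity back into the equations to upgrade it to $W_1 = W_2^\intercal$.

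Write $\Sigma := XX^\intercal$ and let $E := X - W_2W_1X$ be the residual, so that $EX^\intercal = (I - W_2W_1)\Sigma$. Differentiating $\mathcal{L}_\sigma$ and dividing by $2$, the conditions $\nabla_{W_1}\mathcal{L}_\sigma = 0$ and $\nabla_{W_2}\mathcal{L}_\sigma = 0$ become
\[ \lambda W_1 = W_2^\intercal E X^\intercal, \qquad \lambda W_2 = E X^\intercal W_1^\intercal . \]
Substituting the second identity into the first gives $\lambda\,W_1W_1^\intercal = W_2^\intercal\bigl(EX^\intercal W_1^\intercal\bigr) = \lambda\,W_2^\intercal W_2$, and since $\lambda > 0$,
\[ W_1 W_1^\intercal = W_2^\intercal W_2 . \]
(Equivalently: right-multiply the $W_1$-equation by $W_1^\intercal$, left-multiply the $W_2$-equation by $W_2^\intercal$, expand $EX^\intercal = (I - W_2W_1)\Sigma$, and subtract.)

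This square identity is necessary but not sufficient: it only forces the rows of $W_1$ and of $W_2^\intercal$ to agree up to a common orthogonal change of basis in $\mathbb{R}^m$ --- in the scalar picture of Section~\ref{sec1d} it yields $w_1 = \pm w_2$, not $w_1 = w_2$. To remove the leftover ambiguity I would feed $W_1W_1^\intercal = W_2^\intercal W_2$ back into the stationarity equations. Setting $D := W_1 - W_2^\intercal$ and using $EX^\intercal = (I - W_2W_1)\Sigma$ together with the square identity, the two equations collapse to the single relation
\[ D\Sigma + \lambda D + (W_1\Sigma W_1^\intercal)\,D = W_1\bigl(\Sigma W_1^\intercal W_1 - W_1^\intercal W_1\,\Sigma\bigr). \]
The linear operator $D \mapsto D\Sigma + \lambda D + (W_1\Sigma W_1^\intercal)D$ is the sum of right-multiplication by $\Sigma \succeq 0$, the positive multiple $\lambda I$ of the identity, and left-multiplication by $W_1\Sigma W_1^\intercal \succeq 0$; since these pieces commute, every eigenvalue of the operator is at least $\lambda > 0$, so it is invertible. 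Hence $D = 0$ --- that is, $W_1 = W_2^\intercal$ --- as soon as the right-hand side vanishes, i.e. as soon as $W_1\bigl(\Sigma W_1^\intercal W_1 - W_1^\intercal W_1\Sigma\bigr) = 0$; geometrically this says the rows of $W_1$ span a subspace assembled from principal directions of $X$.

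The main obstacle is exactly this vanishing: showing that at \emph{every} critical point the row space of $W_1$ is built from eigenvectors of $\Sigma = XX^\intercal$, which is where the standing hypothesis that the singular values of $X$ are positive and distinct enters. The concrete route I would take is to choose an SVD $W_1 = USV^\intercal$ with $U \in \mathrm{O}_k(\mathbb{R})$, $S$ diagonal and positive, and $V$ with orthonormal columns; the square identity then forces $W_2 = \widehat U\,S\,U^\intercal$ with $\widehat U^\intercal\widehat U = I$, so the theorem reduces to $V = \widehat U$, equivalently to the symmetric positive-definite matrix $V^\intercal\widehat U$ being the identity. Substituting this normal form into the two stationarity equations and exploiting the positivity of the regularizer together with the orthonormality of $U$, $V$, and $\widehat U$, one pins the columns of $V$ down to a sum of eigenspaces of $\Sigma$; at that point $V^\intercal\widehat U$ squares to the identity and, being positive definite, equals it, completing the argument.
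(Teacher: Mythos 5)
Your opening moves are sound: the two stationarity equations are correct, the substitution giving $W_1W_1^\intercal = W_2^\intercal W_2$ is valid, and the identity
\[
D\Sigma + \lambda D + (W_1\Sigma W_1^\intercal)\,D \;=\; W_1\bigl(\Sigma W_1^\intercal W_1 - W_1^\intercal W_1\,\Sigma\bigr), \qquad D = W_1 - W_2^\intercal,
\]
does follow from the critical-point equations together with the square identity, and the operator on the left is indeed invertible (left- and right-multiplication commute and each piece is positive semi-definite as an operator on matrices). But this only proves $D = 0$ \emph{conditional on} the right-hand side vanishing, and that vanishing is exactly the hard part of the theorem: it says the row space of $W_1$ is an invariant subspace of $XX^\intercal$, i.e.\ a sum of principal eigenspaces. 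Your sketch for establishing this --- write $W_1 = USV^\intercal$, deduce $W_2 = \widehat U S U^\intercal$, substitute into the stationarity equations, and ``pin the columns of $V$ down to a sum of eigenspaces'' --- is essentially a proof of the Landscape Theorem (Theorem \ref{unified statement}), which the paper derives \emph{from} the Transpose Theorem, not the other way around. As written, the key step is asserted rather than proved, and two subsidiary claims are also unjustified: that $S$ is positive (critical points of $\mathcal{L}_\sigma$ can be rank-deficient, e.g.\ $(0,0)$ and all intermediate ranks, so the reduction to ``$V = \widehat U$'' needs care on the kernel), and that $V^\intercal \widehat U$ is symmetric positive-definite, which you use twice but never establish.

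For contrast, the paper's proof sidesteps the need to know anything about the row space of $W_1$. From $\frac{\partial\mathcal{L}_\sigma}{\partial W_2}W_2^\intercal = 0$ one gets $XX^\intercal(W_2W_1)^\intercal = (W_2W_1)XX^\intercal(W_2W_1)^\intercal + \lambda W_2W_2^\intercal \succeq (W_2W_1)XX^\intercal(W_2W_1)^\intercal$, and a cancellation lemma for positive semi-definite matrices (Lemma \ref{psdprops}) then gives $C := (I - W_2W_1)XX^\intercal \succeq 0$ (in particular $C$ is symmetric). The combination $\frac{\partial\mathcal{L}_\sigma}{\partial W_1} - \bigl(\frac{\partial\mathcal{L}_\sigma}{\partial W_2}\bigr)^\intercal$ then collapses to $2(W_1 - W_2^\intercal)(C + \lambda I) = 0$, and $C + \lambda I \succ 0$ forces $W_1 = W_2^\intercal$. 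If you want to rescue your route, you could try to show directly that the trace pairing of $D$ with your right-hand side is nonpositive (which combined with $\lambda\|D\|_F^2$ on the left would force $D=0$), but absent such an argument the proposal has a genuine gap.
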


Our proof uses elementary properties of positive definite matrices as reviewed in Appendix \ref{definite}. Note that without regularization, all critical points are psuedoinverses, $W_1 = W_2^+$, as is clear in the scalar case and derived in Section \ref{seccrit}.

\begin{proof}
Critical points of $\mathcal{L}_\sigma$ satisfy:
\begin{align*}
\frac{\partial \mathcal{L}_\sigma}{\partial W_1} &= 2W_2^\intercal(W_2W_1 - I)XX^\intercal + 2\lambda W_1 = 0,\\
\frac{\partial \mathcal{L}_\sigma}{\partial W_2} &= 2(W_2W_1 - I)XX^\intercal W_1^\intercal + 2\lambda W_2 = 0.
\end{align*}

We first prove that the matrix $$C = (I - W_2 W_1)XX^\intercal$$ is positive semi-definite\footnote{Intuitively, we expect this property so long as $W_2 W_1$ shrinks the principal directions of $X$, so that $I - W_2 W_1$ does as well.}. Rearranging $\frac{\partial \mathcal{L}_\sigma}{\partial W_2} W_2^\intercal$ gives
$$XX^\intercal(W_2 W_1)^\intercal = (W_2 W_1)XX^\intercal (W_2W_1)^\intercal + \lambda W_2 W_2^\intercal.$$
Both terms on the right are positive semi-definite, so their sum on the left is as well and therefore
$$XX^\intercal(W_2W_1)^\intercal \succeq (W_2W_1) XX^\intercal (W_2W_1)^\intercal.$$ Cancelling $(W_2 W_1)^\intercal$ via Lemma \ref{psdprops} gives $C \succeq 0$.

We now show the difference $A = W_1 - W_2^\intercal$ is zero. Rearranging terms using the symmetry of $C$ gives
$$0 = \frac{\partial \mathcal{L}_\sigma}{\partial W_1} - \frac{\partial \mathcal{L}_\sigma}{\partial W_2}^\intercal = 2 A (C + \lambda I).$$
Since $C \succeq 0$ and $\lambda > 0$ imply $C + \lambda I \succ 0$, we conclude from $$A (C + \lambda I) A^T = 0$$
that $A = 0$.
\end{proof}

\section{Bayesian Models}
\label{bayesian models}

In this section, we identify Bayesian counterparts of our three loss functions and derive a novel connection between (regularized) LAEs and (probabilistic) PCA. This connection enables the application of any LAE training method to Bayesian MAP estimation of the corresponding model.

Consider the rank-$k$ (self-)regression model
$$x_i = W_2 W_1 x_i + \varepsilon_i = W x_i + \varepsilon_i$$
where $W_1$ and $W_2$ act through their product $W$ and $\varepsilon_i \sim \mathcal{N}_m(0, 1)$.

\begin{compactitem}
    \item $\mathcal{L}$ is \textit{rank-k regression}. The prior on $W$ is the uniform distribution on $\mathbb{R}^{m \times m}$ restricted to rank-$k$ matrices\footnote{Note that rank-(k-1) matrices are a measure zero subset of rank-$k$ matrices.}.
    
    \item $\mathcal{L}_\pi$ is \textit{rank-k ridge regression}. The prior on $W$ is $\mathcal{N}_{m \times m}(0, \lambda^{-1})$ restricted to rank-$k$ matrices.
    
    \item $\mathcal{L}_\sigma$ is the model with $W_1$ and $W_2^\intercal$ independently drawn from $\mathcal{N}_{k \times m}(0, \lambda^{-1})$.
\end{compactitem}

Theorem \ref{unified statement} shows that the minima of $\mathcal{L}_\sigma$, or equivalently the MAP of the Bayesian model, are such that $W_2$ is the orthogonal projection onto the top $k$ principal directions followed by compression in direction $i$ via multiplication by the factor $(1 - \lambda \sigma_i^{-2})^\frac{1}{2}$ for $\sigma_i^2 > \lambda$ and zero otherwise. Notably, for principal directions with eigenvalues dominated by $\lambda$, all information is lost no matter the number of data points. The same phenomenon occurs for pPCA with respect to eigenvalues dominated by the variance of the noise, $\sigma^2$. Let's consider these Bayesian models side by side, with $W_0 \in \mathbb{R}^{m \times k}$ the parameter of pPCA\footnote{See Chapter 12.2 of \citet{bishop} for background on pPCA.}:

\begin{table}[H]
\centering
\begin{tabular}{@{}cc@{}} \toprule
Bayesian $\mathcal{L}_\sigma$ & pPCA\\ \midrule
$\begin{aligned}[t]
W_1, W_2^\intercal &\sim \mathcal{N}_{k \times m}(0, \lambda^{-1})\\
\varepsilon_i &\sim \mathcal{N}_m(0, 1)\\
x_i &= W_2 W_1 x_i + \varepsilon_i
\end{aligned}$ &
$\begin{aligned}[t]
z_i &\sim \mathcal{N}_k(0, 1)\\
\varepsilon_i &\sim \mathcal{N}_m(0, \sigma^2)\\
x_i &= W_0 z_i + \varepsilon_i\\
\end{aligned}$\\ \bottomrule
\end{tabular}
\end{table}

Comparing the critical points of $\mathcal{L}_\sigma$ in Theorem \ref{unified statement},
\begin{equation}
\label{ppca1}
W_1^T = W_2 = U_\mathcal{I} (I_\ell - \lambda \Sigma^{-2}_\mathcal{I})^{\frac{1}{2}} O^\intercal,
\end{equation}
and pPCA \citep{bishop99},
\begin{equation}
\label{ppca2}
W_0 = U_\mathcal{I} \Sigma_\mathcal{I}(I_\ell - \sigma^2 \Sigma^{-2}_\mathcal{I})^{\frac{1}{2}}O^\intercal,
\end{equation}
where $O \in \mathbb{R}^{k \times \ell}$ has orthonormal columns, we see that $\lambda$ corresponds to $\sigma^2$ (rather than the precision $\sigma^{-2}$) in the sense that principal directions with eigenvalues dominated by either are collapsed to zero. The critical points only differ in the factor by which the the remaining principal directions are shrunk. More precisely:

\begin{theorem}[pPCA Theorem]
\label{thmppca}
With $\sigma^2 = \lambda$, the critical points of $$\mathcal{L}^0_\sigma(W_1, W_2) = \mathcal{L}_\sigma(W_1(XX^\intercal)^{-\frac{1}{2}}, (XX^\intercal)^{-\frac{1}{2}} W_2)$$ coincide with the critical points of pPCA.
\end{theorem}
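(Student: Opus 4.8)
The plan is to observe that $\mathcal{L}^0_\sigma$ is obtained from $\mathcal{L}_\sigma$ by an \emph{invertible linear} change of coordinates on the parameter space, so its critical points are the images under that coordinate change of the critical points of $\mathcal{L}_\sigma$ already classified in Theorem~\ref{unified statement}; one then checks that, when $\sigma^2=\lambda$, this image is exactly the set of stationary points of pPCA as described in \eqref{ppca2} and classified by \citet{bishop99}.

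Concretely, set $M = XX^\intercal$, which is positive definite since the singular values of $X$ are positive, and let $\Phi(W_1,W_2) = (W_1 M^{-1/2},\, M^{-1/2} W_2)$, so that $\mathcal{L}^0_\sigma = \mathcal{L}_\sigma \circ \Phi$. Because $M^{-1/2}$ is invertible, $\Phi$ is a linear automorphism of $\mathbb{R}^{k\times m}\times\mathbb{R}^{m\times k}$, hence $(W_1,W_2)$ is critical for $\mathcal{L}^0_\sigma$ iff $\Phi(W_1,W_2)$ is critical for $\mathcal{L}_\sigma$. By Theorem~\ref{unified statement} the latter forces $W_1 M^{-1/2} = (M^{-1/2}W_2)^\intercal$ and $M^{-1/2} W_2 = U_\mathcal{I}(I_\ell - \lambda \Sigma_\mathcal{I}^{-2})^{1/2} O^\intercal$ for some $\mathcal{I}\subseteq\{i : \sigma_i^2 > \lambda\}$ and some $O \in \mathbb{R}^{k\times\ell}$ with orthonormal columns. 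Now undo $\Phi$: since the columns of $U_\mathcal{I}$ are eigenvectors of $M$ with eigenvalues the diagonal entries of $\Sigma_\mathcal{I}^2$, we have $M^{1/2}U_\mathcal{I} = U_\mathcal{I}\Sigma_\mathcal{I}$, and multiplying through by $M^{1/2}$ gives $W_1 = W_2^\intercal$ together with
$$ W_2 \;=\; U_\mathcal{I}\,\Sigma_\mathcal{I}\,(I_\ell - \lambda \Sigma_\mathcal{I}^{-2})^{1/2}\, O^\intercal . $$
With $\sigma^2 = \lambda$ this is precisely the form \eqref{ppca2} of the pPCA loading matrix $W_0$, and conversely every such matrix arises this way; invoking the classification of the stationary points of the pPCA log-likelihood (with fixed noise variance) due to \citet{bishop99}, the critical points of $\mathcal{L}^0_\sigma$ therefore coincide with those of pPCA under the identification $W_0 = W_2$ (with $W_1 = W_2^\intercal$ forced).

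The change-of-variables bookkeeping and the identity $M^{1/2}U_\mathcal{I} = U_\mathcal{I}\Sigma_\mathcal{I}$ are routine. The main obstacle is making ``coincide with the critical points of pPCA'' precise and verifying it honestly: one must (i) pin down which function ``pPCA'' refers to --- the marginal likelihood $\prod_i \mathcal{N}_m(x_i;\,0,\,W_0W_0^\intercal + \sigma^2 I)$ in the parameter $W_0$ with $\sigma^2$ held at $\lambda$ --- and reconcile the covariance normalization ($XX^\intercal$ versus $\tfrac{1}{n}XX^\intercal$) implicit in \eqref{ppca2}; (ii) confirm that the \citet{bishop99} parametrization ranges over \emph{all} subsets $\mathcal{I}$ of principal directions with eigenvalue exceeding $\lambda$ and \emph{all} conformal orthonormal $O$, matching the range given by Theorem~\ref{unified statement} term for term, including the bookkeeping of ``inactive'' directions as zero columns; and (iii) emphasize that only the \emph{set} of critical points is asserted to match --- the two objective functions genuinely differ, so their Hessians and Morse indices at a shared critical point need not agree.
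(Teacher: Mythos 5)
Your proposal is correct and follows essentially the same route as the paper: the paper's proof is the single sentence ``Multiplying the expression for $W_2$ in \eqref{ppca1} on the left by $(XX^\intercal)^{\frac{1}{2}}$ gives the expression for $W_0$ in \eqref{ppca2},'' which is exactly your change-of-variables argument via $\Phi$ together with the identity $(XX^\intercal)^{1/2}U_\mathcal{I} = U_\mathcal{I}\Sigma_\mathcal{I}$. Your additional care about what ``coincide'' means, the covariance normalization, and the fact that only the critical \emph{sets} (not the Hessians) are matched goes beyond what the paper makes explicit, but it is the same underlying argument.
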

\begin{proof}
Multiplying the expression for $W_2$ in \eqref{ppca1} on the left by $(XX^\intercal)^{\frac{1}{2}}$ gives the expression for $W_0$ in \eqref{ppca2}.
\end{proof}
Interestingly, the generative model for $\mathcal{L}^0_\sigma$ differs from that of pPCA. For example, in the scalar case $\mathcal{L}^0_\sigma(w_1, w_2)$ is
$$x^2 \left(1-x^{-2} w_2w_1\right)^2 + \lambda x^{-2}(w_1^2 + w_2^2)$$
whereas the negative log likelihood of pPCA is
$$\frac{1}{2}\left(\ln(2\pi) + \ln(w_0^2+\sigma^2) + x^2 (w_0^2 + \sigma^2)^{-1}\right).$$

\section{Loss Landscapes}
\label{seccrit}

Having contextualized LAE models in a Bayesian framework, we now turn to understanding their loss landscapes. Symmetries such as \eqref{symg} exist because the model is expressed in an ``overparameterized" coordinate form rooted in classical linear algebra. This results in ``flat" critical manifolds rather than a finite number of critical points. In Section \ref{secgrass}, we remove all symmetries by expressing the loss geometrically over a topological domain. This results in $\binom{m}{k}$ critical points, and in particular a unique minimum. This intuition will pay off in Sections \ref{alltogether} and \ref{curvature}, where we fully characterize the critical manifolds and local curvatures of all three LAE loss landscapes.

\subsection{Critical points}
\label{secgrass}

We now consider reconstruction loss over the domain of $k$-dimensional planes through the origin in $\mathbb{R}^m$. This space has the structure of a $k(m-k)$-dimensional smooth, compact manifold called the Grassmannian of $k$-planes in $\mathbb{R}^m$ and denoted $\mathrm{Gr}_k(\mathbb{R}^m)$ \citep{hatcher}. We'll build intuition with a few simple examples.

\begin{figure}[tb]
\vskip 0.2in
\begin{center}
\begin{subfigure}{\columnwidth}
  \centerline{
  \includegraphics[width=\linewidth]{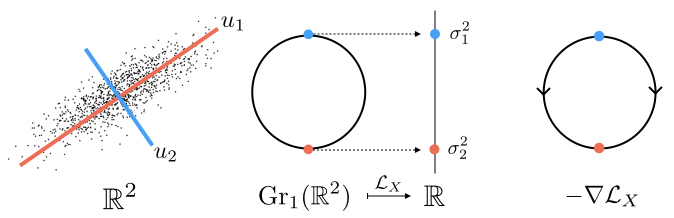}
  }
  \caption{Lines in the plane.}
\end{subfigure}
\begin{subfigure}{\columnwidth}
  \centerline{
  \includegraphics[width=\linewidth]{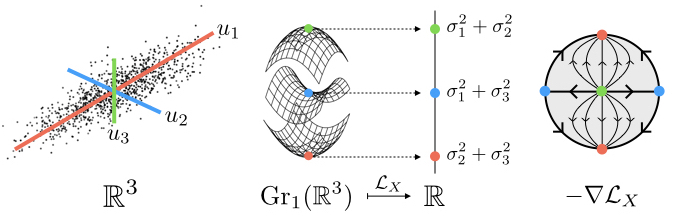}
  }
  \caption{Lines in space.}
\end{subfigure}
\caption{\textit{Left:} Principal directions of a point cloud $X$. \textit{Middle:} $\mathcal{L}_X$ as height function on the manifold of lines through the origin. \textit{Right:} Negative gradient flow of $\mathcal{L}_X$.}
\label{grassman}
\end{center}
\vskip -0.2in
\end{figure}

\begin{compactitem}
\item $\mathrm{Gr}_1(\mathbb{R}^2)$ is the space of lines through the origin in the plane, which may be smoothly parameterized by a counterclockwise angle of rotation of the $x$-axis modulo the half turn that maps a line to itself.

\item $\mathrm{Gr}_1(\mathbb{R}^3)$ is the space of lines through the origin in 3-space, also known as the real projective plane. We can visualize $\mathrm{Gr}_1(\mathbb{R}^3)$ as the northern hemisphere of the 2-sphere with equator glued by the antipodal map.

\item $\mathrm{Gr}_2(\mathbb{R}^3)$ is identified with $\mathrm{Gr}_1(\mathbb{R}^3)$ by mapping a plane to its 1-dimensional orthogonal complement.
\end{compactitem}

A point cloud $X$ in $\mathbb{R}^m$ determines a smooth function $$\mathcal{L}_X: \mathrm{Gr}_k(\mathbb{R}^m) \to \mathbb{R}$$ whose value on a $k$-plane is the sum of square distances from the points to the plane. Figure \ref{grassman} depicts $\mathcal{L}_X$ as a height function for $\mathrm{Gr}_1(\mathbb{R}^2)$ and $\mathrm{Gr}_1(\mathbb{R}^3)$. Note that the min and max in (a) are the principal directions $u_1$ and $u_2$, while the min, saddle, and max in (b) are the principal directions $u_1$, $u_2$, and $u_3$. At right, we depict the negative gradient flow $-\nabla \mathcal{L}_X$ on each space, with (b) represented as a disk with glued boundary. In (a), $u_1$ may descend to $u_2$ by rotating clockwise or counterclockwise\footnote{Formally, we mean there are two gradient trajectories (modulo time translation) that converge to $u_1$ and $u_2$ in each time direction asymptotically, namely the left and right halves of the circle.}. In (b), $u_1$ may descend to $u_2$ or $u_3$ by rotating in either of two directions in the plane they span, and $u_2$ may similarly descend to $u_3$.

The following theorem requires our assumption that the singular values of $X$ are distinct. 

\begin{theorem}[Grassmannian Theorem]
\label{perfect}
$\mathcal{L}_X$ is a smooth function with $\binom{m}{k}$ critical points given by all rank-$k$ principal subspaces. In local coordinates near the critical point with principal directions $i_1 < \ldots < i_k$, $\mathcal{L}_X$ takes the form of a standard non-degenerate saddle with
\begin{equation}
\label{morseindex}
d_\mathcal{I} = \sum_{j=1}^k (i_j - j).
\end{equation}
descending directions.
\end{theorem}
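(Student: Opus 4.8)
The plan is to rewrite $\mathcal{L}_X$ in terms of orthogonal projectors, turning it into (a constant minus) a trace functional on the Grassmannian, and then to carry out the local analysis in the standard affine chart. Writing $\Pi_P$ for the orthogonal projection onto a $k$-plane $P$, the squared distance from $x_i$ to $P$ is $\|(I-\Pi_P)x_i\|^2$, and since $I-\Pi_P$ is a symmetric idempotent, summing over $i$ gives
\[ \mathcal{L}_X(P) = \tr\!\big((I-\Pi_P)XX^\intercal\big) = \tr(XX^\intercal) - f(P), \qquad f(P) := \tr(\Pi_P XX^\intercal). \]
Smoothness of $\mathcal{L}_X$ is then immediate from smoothness of $P\mapsto\Pi_P$, and the critical points, Hessians, and Morse indices of $\mathcal{L}_X$ are exactly those of $-f$; this also reproves Eckart--Young, since $\max_P f(P) = \sigma_1^2 + \cdots + \sigma_k^2$ is Ky Fan's inequality.

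For the local picture, fix a $k$-plane $P_0$ and an orthonormal basis of $\mathbb{R}^m$ adapted to $P_0 \oplus P_0^\perp$. In the standard affine chart, every nearby $k$-plane is the column space of $\bigl(\begin{smallmatrix} I_k \\ Z\end{smallmatrix}\bigr)$ for a unique $Z \in \mathbb{R}^{(m-k)\times k}$, matching $\dim \mathrm{Gr}_k(\mathbb{R}^m) = k(m-k)$, so $\Pi_P = \bigl(\begin{smallmatrix}I_k\\ Z\end{smallmatrix}\bigr)(I_k + Z^\intercal Z)^{-1}\bigl(\begin{smallmatrix}I_k & Z^\intercal\end{smallmatrix}\bigr)$. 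Writing $XX^\intercal$ in this basis with blocks $A_{11},A_{12},A_{21}=A_{12}^\intercal,A_{22}$, a short computation gives $f(Z) = \tr\big((I_k+Z^\intercal Z)^{-1}(A_{11} + Z^\intercal A_{21} + A_{12}Z + Z^\intercal A_{22}Z)\big)$, whose degree-one term in $Z$ is $2\langle A_{21},Z\rangle$. Hence $P_0$ is critical iff $A_{21}=0$, i.e. iff $P_0$ is $XX^\intercal$-invariant; by the distinctness hypothesis $XX^\intercal$ is symmetric with simple spectrum, so its $k$-dimensional invariant subspaces are precisely the $\binom{m}{k}$ spans of $k$-subsets of principal directions. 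This gives the first claim.

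To get the index at the critical point with indices $\mathcal{I} = \{i_1 < \cdots < i_k\}$, choose the adapted basis to consist of principal directions, so $XX^\intercal$ is diagonal with $D_1 = \mathrm{diag}(\sigma_{i_1}^2,\dots,\sigma_{i_k}^2)$ and $D_2 = \mathrm{diag}(\{\sigma_j^2 : j \notin \mathcal{I}\})$. Expanding $(I+Z^\intercal Z)^{-1} = I - Z^\intercal Z + O(\|Z\|^4)$ and taking traces yields $f(Z) = \tr(D_1) + \tr(Z^\intercal D_2 Z) - \tr(Z^\intercal Z D_1) + O(\|Z\|^4)$, and the quadratic part is \emph{diagonal} in the matrix entries: it equals $\sum_{a,b}(\mu_a - \nu_b)\,Z_{ab}^2$ where $\nu_b = \sigma_{i_b}^2$ and $\mu_a$ ranges over $\{\sigma_j^2 : j\notin\mathcal{I}\}$. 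Distinctness makes every coefficient nonzero, so the Hessian of $\mathcal{L}_X$ is non-degenerate and the Morse Lemma puts $\mathcal{L}_X$ in standard saddle form near this point. A descending direction of $\mathcal{L}_X = \mathrm{const} - f$ is an ascending direction of $f$, i.e. a coordinate with $\mu_a > \nu_b$; for fixed $b$ the number of $j\notin\mathcal{I}$ with $\sigma_j^2 > \sigma_{i_b}^2$ is $(i_b - 1) - (b-1) = i_b - b$, and summing over $b$ gives $d_\mathcal{I} = \sum_{j=1}^k (i_j - j)$.

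I expect the main obstacle to be bookkeeping rather than conceptual depth: getting the affine chart and the projector formula exactly right, keeping the block/trace manipulations clean through second order (in particular checking that no cubic term survives at a critical point, which holds because everything is assembled from $Z^\intercal Z$ and $Z^\intercal D_2 Z$), and --- the most error-prone step --- tracking signs so that ``descending for $\mathcal{L}_X$'' matches ``$\mu_a > \nu_b$''. A minor point to state carefully is the hypothesis actually used: that the $m$ eigenvalues of $XX^\intercal$ are distinct (and positive), which is the distinctness assumption invoked immediately before the theorem.
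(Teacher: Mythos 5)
Your proof is correct, and it takes a genuinely different route from the paper's. The paper proves this theorem (in Appendix E, in the strengthened form that $\mathcal{L}_X$ is an $\mathbb{F}_2$-perfect Morse function) by pulling $\mathcal{L}_X$ back through the Stiefel manifold via the commutative diagram \eqref{commutegrass}: the critical points are identified by transporting the Landscape Theorem \ref{unified statement} (hence ultimately Proposition \ref{diagonal critical points}) through the immersion $\iota$ and submersion $\pi$, and the non-degeneracy and index count come from a geometric argument about rotating each included principal direction toward each excluded one in the plane they span. You instead work intrinsically on $\mathrm{Gr}_k(\mathbb{R}^m)$: writing $\mathcal{L}_X(P)=\tr((I-\Pi_P)XX^\intercal)$ and expanding $\Pi_P$ in the standard affine chart $Z\mapsto \mathrm{col}\bigl(\begin{smallmatrix}I_k\\ Z\end{smallmatrix}\bigr)$ reduces everything to a two-term Taylor expansion; the linear term $2\langle A_{21},Z\rangle$ identifies critical planes with $XX^\intercal$-invariant subspaces, and the quadratic form $\sum_{a,b}(\nu_b-\mu_a)Z_{ab}^2$ is already diagonal, so non-degeneracy and the count $d_\mathcal{I}=\sum_j(i_j-j)$ drop out by inspection (your bookkeeping $(i_b-1)-(b-1)=i_b-b$ is right, and the sign convention descending-for-$\mathcal{L}_X$ iff $\mu_a>\nu_b$ is also right). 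What each approach buys: yours is self-contained, needs no machinery beyond the Morse Lemma, makes the Hessian computation fully explicit where the paper's rotation argument is more heuristic, and incidentally reproves Eckart--Young via Ky Fan; the paper's approach reuses the coordinate-space analysis it needs anyway for the Landscape Theorem and, by exhibiting the gradient trajectories as rotations coming in antipodal pairs, yields the $\mathbb{F}_2$-perfection statement that your chart computation does not address (nor does the theorem as stated require it). One small remark: at a critical point your $f$ is even in $Z$, so no odd-order terms survive at all, but this worry is moot anyway since the Morse Lemma only requires a non-degenerate Hessian.
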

The latter formula counts the total number of pairs $i < j$ with $i \in \mathcal{I}$ and $j \notin \mathcal{I}$. These correspond to directions to flow along $-\nabla\mathcal{L}_X$ by rotating one principal direction $u_i$ to another $u_j$ of higher eigenvalue, fixing the rest.

In Appendix \ref{morse}, we prove a stronger form of the Grassmannian Theorem by combining Theorem \ref{unified statement}, a commutative diagram \eqref{commutegrass} relating $\mathcal{L}_X$ and $\mathcal{L}$, and techniques from algebraic topology.

\subsection{Critical manifolds}
\label{alltogether}

Translating the Grassmannian Theorem back to the coordinate representation $\mathbb{R}^{k \times m} \times \mathbb{R}^{m \times k}$ introduces two additional phenomena we saw in the scalar case in Section \ref{sec1d}. 

\begin{compactitem}
\item Each critical point on $\mathrm{Gr}_k(\mathbb{R}^m)$ corresponds to a manifold $\mathrm{GL}_k(\mathbb{R})$ or $\mathrm{O}_k(\mathbb{R})$ of rank-$k$ critical points.

\item Critical manifolds appear with rank less than $k$. In particular, $(0, 0)$ is a critical point for all three losses.
\end{compactitem}

Now let's now combine our topological and scalar intuition to understand the the loss landscapes of LAEs in all dimensions and for all three losses.

Theorem \ref{unified statement} requires our assumption\footnote{For the sum loss, we also assume $\lambda$ is distinct from all $\sigma_i^2$.} that $X$ has distinct singular values $\sigma_1 > \dots > \sigma_m > 0$. Let $u_1, \dots, u_m$ denote the corresponding left singular vectors (or principal directions) of $X$. For an index set $\mathcal{I} \subset \{1, \ldots, m \}$ we define:
\begin{compactitem}
    \item[$\bullet$] $\ell = |\mathcal{I}|$ and increasing indices $i_1 < \dots < i_\ell$,
    \item[$\bullet$] $\Sigma_\mathcal{I} = \mathrm{diag}(\sigma_{i_1}, \dots, \sigma_{i_\ell})\in \mathbb{R}^{\ell \times \ell}$,
    \item[$\bullet$] $U_\mathcal{I} \in \mathbb{R}^{m \times \ell}$ consisting of columns $i_1, \ldots, i_\ell$ of $U$,
    \item $F_\mathcal{I}$, the open submanifold of $\mathbb{R}^{k \times \ell}$ whose points are matrices $G$ with independent columns\footnote{Also known as the manifold of $l$-frames in $\mathbb{R}^k$.},
    \item $V_\mathcal{I}$, the closed submanifold of $\mathbb{R}^{k \times \ell}$ whose points are matrices $O$ with orthonormal columns\footnote{Also known as the Stiefel manifold $V_l(\mathbb{R}^k)$.}.
\end{compactitem}

\begin{theorem}[Landscape Theorem]
\label{unified statement}
For each loss, the critical points form a smooth submanifold  of $\mathbb{R}^{k \times m} \times \mathbb{R}^{m \times k}$.

For $\mathcal{L}$ and $\mathcal{L}_\pi$, this submanifold is diffeomorphic to the disjoint union of $F_\mathcal{I}$ over all $\mathcal{I} \subset \{1, \ldots, m \}$ of size at most $k$.

For $\mathcal{L}_\sigma$, this submanifold is diffeomorphic to the disjoint union of $V_\mathcal{I}$ over all $\mathcal{I} \subset \{1, \ldots, m_0 \}$ of size at most $k$, where $m_0$ is the largest index such that $\sigma_{m_0}^2 > \lambda$.

These diffeomorphisms map $G \in F_\mathcal{I}$ or $O \in V_\mathcal{I}$ to a critical point $(W_1, W_2)$ as follows:

\begin{table}[H]
\hspace*{-0.25cm}
\centering
\begin{tabular}{@{}lcc@{}} \toprule
    & $W_2$  & $W_1$ \\ \midrule
    $\mathcal{L}$ & $U_\mathcal{I} G^+$ & $G U_\mathcal{I}^\intercal$ \\
    $\mathcal{L}_\pi$ & $U_\mathcal{I} (I_\ell + \lambda \Sigma^{-2}_\mathcal{I})^{-\frac{1}{2}}G^+$ & $G (I_\ell + \lambda \Sigma^{-2}_\mathcal{I})^{-\frac{1}{2}} U_\mathcal{I}^\intercal$ \\
    $\mathcal{L}_\sigma$ & $U_\mathcal{I} (I_\ell - \lambda \Sigma^{-2}_\mathcal{I})^\frac{1}{2} O^\intercal$ & $O (I_\ell - \lambda \Sigma^{-2}_\mathcal{I})^\frac{1}{2} U_\mathcal{I}^\intercal$\\ \bottomrule
\end{tabular}
\end{table}
\end{theorem}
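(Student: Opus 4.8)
The plan is to handle the three losses along a common four-step skeleton. \emph{Step 1 (diagonalize).} Writing $XX^\intercal = U\Sigma^2 U^\intercal$ with $\Sigma^2 = \mathrm{diag}(\sigma_1^2 > \cdots > \sigma_m^2 > 0)$, the isometric substitution $W_1 \mapsto W_1 U$, $W_2 \mapsto U^\intercal W_2$ replaces $XX^\intercal$ by $\Sigma^2$ in all three losses and conjugates $U_\mathcal{I}$ to the coordinate inclusion, so I may assume $U = I$. \emph{Step 2.} Show that at every critical point the column space $S := \mathrm{col}(W_2)$ is invariant under $\Sigma^2$, hence $S = \mathrm{span}\{e_i : i \in \mathcal{I}\}$ for a unique $\mathcal{I}$ with $\ell := |\mathcal{I}| = \mathrm{rank}(W_2) \le k$, and $\mathrm{row}(W_1) = S$ as well. \emph{Step 3.} Solve the residual equations within each sector $\mathcal{I}$. \emph{Step 4.} Check that the sectors glue into a smooth embedded submanifold of the stated diffeomorphism type.

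For Step 2 the arguments diverge. For $\mathcal{L}$: since $\Sigma^2$ is invertible, $\partial_{W_1}\mathcal{L} = 0$ becomes $W_2^\intercal(W_2 W_1 - I) = 0$, so every column of $W_2W_1 - I$ lies in $S^\perp$ while $W_2W_1$ maps into $S$; this forces $W_2 W_1 = \Pi_S$, the orthogonal projection onto $S$. Then $\partial_{W_2}\mathcal{L} = 0$ reads $(\Pi_S - I)\Sigma^2 W_1^\intercal = 0$, i.e. $\Sigma^2\,\mathrm{col}(W_1^\intercal) \subseteq S$; the two equations together force $\mathrm{rank}(W_1) = \ell$, hence $\mathrm{col}(W_1^\intercal) = \Sigma^{-2}S$, and since $\mathrm{row}(\Pi_S) = S \subseteq \mathrm{row}(W_1)$ we get $S \subseteq \Sigma^{-2}S$, so $\Sigma^2 S = S$. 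For $\mathcal{L}_\pi$ I would first absorb the penalty by the invertible reparametrization $\tilde W_1 = W_1(XX^\intercal + \lambda I)^{1/2}$, which rewrites the loss as $\mathrm{const} + \|W_2\tilde W_1 - M\|_F^2$ with $M = \Sigma^2(\Sigma^2+\lambda I)^{-1/2}$ diagonal with distinct positive entries; the same projection-and-rank argument (now with $W_2\tilde W_1 = \Pi_S M$) again yields $\Sigma^2 S = S$. For $\mathcal{L}_\sigma$ this step is where Theorem \ref{orthogonality} is essential: writing $W_1 = W_2^\intercal =: W^\intercal$, the stationarity equations become $\Sigma^2(WW^\intercal - I)W + \lambda W = 0$ and $(WW^\intercal - I)\Sigma^2 W + \lambda W = 0$; subtracting gives $[\Sigma^2, WW^\intercal]\,W = 0$, and right-multiplying by $W^\intercal$ gives $[\Sigma^2, WW^\intercal]\,WW^\intercal = 0$, so $\Sigma^2$ commutes with $(WW^\intercal)^2$ and hence, $WW^\intercal$ being positive semidefinite, with $WW^\intercal$ itself. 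Distinctness of the $\sigma_i^2$ then makes $WW^\intercal = \mathrm{diag}(\mu_1,\dots,\mu_m)$, and $\big((WW^\intercal - I)\Sigma^2 + \lambda I\big)W = 0$ forces, row by row, either $\mu_i = 1 - \lambda\sigma_i^{-2}$ with the $i$-th row of $W$ nonzero (possible only when $\sigma_i^2 > \lambda$) or the $i$-th row of $W$ zero; the support is $\mathcal{I} \subseteq \{1,\dots,m_0\}$ of size $\ell \le k$.

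Given Step 2, Step 3 is short: write $W_2 = U_\mathcal{I} A$, $W_1 = B\,U_\mathcal{I}^\intercal$ and substitute. For $\mathcal{L}$ the equations reduce to $AB = I_\ell$; for $\mathcal{L}_\pi$, after undoing the reparametrization, to $AB = \Sigma_\mathcal{I}^2(\Sigma_\mathcal{I}^2+\lambda I_\ell)^{-1}$; for $\mathcal{L}_\sigma$ the rows of $W$ are orthogonal with squared lengths $\mu_i = 1 - \lambda\sigma_i^{-2}$, giving $W = U_\mathcal{I}(I_\ell - \lambda\Sigma_\mathcal{I}^{-2})^{1/2}O^\intercal$ with $O \in V_\mathcal{I}$, exactly the $\mathcal{L}_\sigma$ row of the table (with $W_1 = W_2^\intercal$). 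For Step 4, each sector map is a smooth injection with smooth inverse (for $\mathcal{L}_\sigma$, the inverse is $O^\intercal = (I_\ell - \lambda\Sigma_\mathcal{I}^{-2})^{-1/2}U_\mathcal{I}^\intercal W_2$), hence an embedding; images for distinct $\mathcal{I}$ are disjoint because $W_2W_1$ has distinct support and rank; and the finitely many sectors cannot accumulate (the $V_\mathcal{I}$ are compact, and for $\mathcal{L}$ and $\mathcal{L}_\pi$ a degenerating frame sends $\|W_2\|_F \to \infty$). A direct substitution check then confirms every tabulated pair is critical.

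I expect Step 2 to be the crux: it is the assertion that a trained LAE locks onto genuine principal directions rather than an oblique subspace, and for $\mathcal{L}_\sigma$ it really does require the Transpose Theorem together with the commutator/positive-semidefiniteness manoeuvre. The one point I would be careful about in Steps 3--4 is that for $\mathcal{L}$ and $\mathcal{L}_\pi$ the equation $AB = (\text{prescribed matrix})$ leaves an $\ell(k-\ell)$-parameter gauge freedom when $\ell < k$ (any full-rank $B$, then any compatible $A$), so the tabulated entries should be read as the distinguished pseudoinverse section $B = G$, $A = (\text{diagonal scaling})\cdot G^+$ of $F_\mathcal{I}$ --- the section containing all minima, which are rank $k$ with $\mathcal{I} = \{1,\dots,k\}$, where $F_\mathcal{I} = \mathrm{GL}_k(\mathbb{R})$ and no freedom remains. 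Everything else reduces to linear algebra and standard embedded-submanifold bookkeeping.
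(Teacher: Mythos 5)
Your proposal is correct in substance but routes the core algebra differently from the paper. The paper funnels all three losses through a single workhorse, Proposition \ref{diagonal critical points}: after the same diagonalizing substitution $Q_1 = W_1U$, $Q_2 = U^\intercal W_2$, each loss is written as $\tr(Q_2Q_1S^2Q_1^\intercal Q_2^\intercal - 2Q_2Q_1D^2)$ for suitable diagonal $S, D$ (for $\mathcal{L}_\sigma$ this requires first splitting off $\lambda\|Q_1 - Q_2^\intercal\|_F^2$ via Lemma \ref{sum of fro} and invoking the Transpose Theorem so that this term and its gradient vanish), and the Proposition is proved by showing $D^{-2}S^2Q^\intercal$ is symmetric, idempotent, and commutes with a diagonal matrix with distinct entries, hence equals $I_\mathcal{I}I_\mathcal{I}^\intercal$. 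Your Step 2 reaches the same conclusion ($\mathrm{col}(W_2)$ is a $\Sigma^2$-invariant, hence coordinate, subspace) by a more geometric argument --- $W_2W_1 = \Pi_S$ plus a rank/inclusion count for $\mathcal{L}$, an explicit completion of the square for $\mathcal{L}_\pi$ (where your injectivity check on $t\mapsto t^2/(t+\lambda)$ plays the role of the paper's hypothesis that $D^2S^{-2}D^2$ has distinct diagonal), and a commutator-plus-PSD-square-root manoeuvre for $\mathcal{L}_\sigma$ in place of the idempotency argument. Both routes lean on the Transpose Theorem \ref{orthogonality} in exactly the same place. Your version is arguably more transparent about \emph{why} the critical subspaces are principal subspaces; the paper's version buys uniformity, handling all three losses with one lemma.

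Your cautionary remark about the $\ell < k$ strata is not merely a stylistic quibble --- it flags a genuine issue that the paper's own proof glosses over. The gradient equations for $\mathcal{L}$ force only $Q_1 = GI_\mathcal{I}^\intercal$, $Q_2 = I_\mathcal{I}A$ with $AG = I_\ell$, and \emph{any} left inverse $A$ of $G$ yields a critical point (e.g.\ for $m=k=2$, $\mathcal{I}=\{1\}$, the pair $Q_1 = e_1e_1^\intercal$, $Q_2 = e_1(1,5)$ is critical but is not of the form $I_\mathcal{I}G^+$ for $G = Q_1I_\mathcal{I}$). The step in the paper's proof of Proposition \ref{diagonal critical points} asserting $I_\mathcal{I}I_\mathcal{I}^\intercal Q_2 = D_\mathcal{I}S_\mathcal{I}^{-1}I_\mathcal{I}G^+$ only establishes that the relevant matrix is \emph{a} left inverse of $G$, not the Moore--Penrose pseudoinverse, so for $\ell < k$ the critical set of $\mathcal{L}$ and $\mathcal{L}_\pi$ carries an additional $\ell(k-\ell)$ dimensions of gauge freedom beyond $F_\mathcal{I}$, exactly as you say. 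Your proposed reading (the table as a distinguished section, exact for the rank-$k$ stratum containing all minima) is the right repair, but be explicit that this is a correction to the statement rather than something your Steps 3--4 can recover as written. For $\mathcal{L}_\sigma$ your argument genuinely pins down both factors, so no such caveat is needed there.
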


The proof of the Landscape Theorem \ref{unified statement} follows quickly from the Transpose Theorem \ref{orthogonality} and Proposition \ref{diagonal critical points}.

\begin{prop}
\label{diagonal critical points}
Let $D, S \in \mathbb{R}^{m \times m}$ be diagonal matrices such that $S$ is invertible and the diagonal of $D^2 S^{-2}D^2$ has distinct non-zero elements. Then the critical points of
$$\mathcal{L}_*(Q_1, Q_2) = \tr(Q_2 Q_1 S^2Q_1^\intercal Q_2^\intercal - 2Q_2Q_1D^2)$$
are smoothly parameterized as the disjoint union of $F_\mathcal{I}$ over all $\mathcal{I} \subset \{1, \ldots, m \}$ of size at most $k$. The diffeomorphism maps $G \in F_\mathcal{I}$ to a critical point $(Q_1, Q_2)$ as follows\footnote{Here $D_\mathcal{I}$ and $S_\mathcal{I}$ are defined like $\Sigma_\mathcal{I}$. $I_\mathcal{I}$ is defined like $U_\mathcal{I}$.}:
\begin{table}[H]
\centering
\begin{tabular}{@{}ccc@{}} \toprule
    & $Q_2$  & $Q_1$ \\ \midrule
    $\mathcal{L}_*$ & $D_\mathcal{I} S_\mathcal{I}^{-1} I_\mathcal{I} G^+$ & $G I_\mathcal{I}^\intercal D_\mathcal{I} S_\mathcal{I}^{-1}$\\ \bottomrule
\end{tabular}
\end{table}
\end{prop}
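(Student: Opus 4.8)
The idea is to absorb $S$ by a linear change of variables, recognize $\mathcal{L}_*$ as a rank-constrained approximation of a diagonal matrix, and then solve the critical equations by a commutant argument. The substitution $\tilde Q_1 := Q_1 S$ is a linear isomorphism of $\mathbb{R}^{k\times m}$, and since $Q_2 Q_1 S^2 Q_1^\intercal Q_2^\intercal = (Q_2\tilde Q_1)(Q_2\tilde Q_1)^\intercal$ while $\tr(Q_2 Q_1 D^2) = \langle Q_2\tilde Q_1,\, D^2 S^{-1}\rangle_F$, completing the square gives $\mathcal{L}_*(Q_1,Q_2) = \|Q_2\tilde Q_1 - B\|_F^2 - \tr(B^2)$ with $B := D^2 S^{-1}$ diagonal. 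Since $B^2 = D^2 S^{-2} D^2$, the hypothesis says precisely that $B^2$ has distinct, nonzero diagonal entries; in particular $B$ is invertible. So it suffices to describe the critical points of $g(Q_2,\tilde Q_1) = \|B - Q_2\tilde Q_1\|_F^2$ and pull back along $\tilde Q_1\mapsto Q_1 = \tilde Q_1 S^{-1}$.

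The critical equations of $g$ are $Q_2^\intercal(P - B) = 0$ and $(P - B)\tilde Q_1^\intercal = 0$, where $P := Q_2\tilde Q_1$. Left-multiplying the first by $\tilde Q_1^\intercal$ and right-multiplying the second by $Q_2^\intercal$ gives $P^\intercal(P - B) = 0$ and $(P - B)P^\intercal = 0$, i.e. $P^\intercal P = P^\intercal B$ and $PP^\intercal = BP^\intercal$. The left sides are symmetric, so $P^\intercal B$ and $BP^\intercal$ are too, hence $P^\intercal B = BP$ and $PB = BP^\intercal$. Using invertibility of $B$, the second relation reads $P^\intercal = B^{-1}PB$; substituting into the first yields $B^2 P = P B^2$. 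As $B^2$ is diagonal with distinct entries, $P$ is diagonal; then $PP^\intercal = BP^\intercal$ forces $p_i(p_i - b_i) = 0$, so each $p_i \in \{0, b_i\}$. Writing $\mathcal{I} := \{i : p_i \neq 0\}$ (equivalently, $p_i = b_i$) gives $P = \sum_{i\in\mathcal{I}} b_i e_i e_i^\intercal$ with $|\mathcal{I}| = \mathrm{rank}(P) \le \mathrm{rank}(Q_2) \le k$.

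It remains to recover $(Q_1,Q_2)$ from $P$. The diagonal matrix $P - B$ vanishes on the coordinates in $\mathcal{I}$ and is invertible on the rest, so $(P-B)\tilde Q_1^\intercal = 0$ forces every row of $\tilde Q_1$ to be supported on $\mathcal{I}$, i.e. $\tilde Q_1 = H I_\mathcal{I}^\intercal$ for a unique $H\in\mathbb{R}^{k\times\ell}$; likewise $Q_2^\intercal(P-B) = 0$ forces $Q_2 = I_\mathcal{I} K$ for a unique $K\in\mathbb{R}^{\ell\times k}$; and $P = Q_2\tilde Q_1$ then reads $KH = B_\mathcal{I} = D_\mathcal{I}^2 S_\mathcal{I}^{-1}$, which, being an invertible $\ell\times\ell$ matrix, forces $\mathrm{rank}(H) = \mathrm{rank}(K) = \ell$. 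Unwinding $\tilde Q_1 = Q_1 S$ (so $\tilde Q_1 = H I_\mathcal{I}^\intercal$ becomes $Q_1 = H S_\mathcal{I}^{-1} I_\mathcal{I}^\intercal$) and setting $G := H D_\mathcal{I}^{-1}$ — an open condition equivalent to $G\in F_\mathcal{I}$ — one checks, via $G^+ G = I_\ell$ and the pseudoinverse identity $(H D_\mathcal{I}^{-1})^+ = D_\mathcal{I} H^+$, that this is exactly $Q_1 = G D_\mathcal{I} S_\mathcal{I}^{-1} I_\mathcal{I}^\intercal$ and $Q_2 = I_\mathcal{I} D_\mathcal{I} S_\mathcal{I}^{-1} G^+$, and conversely that each $G\in F_\mathcal{I}$ so defined is critical (the two equations hold because $(P-B)I_\mathcal{I} = 0$ and $I_\mathcal{I}^\intercal(P-B) = 0$). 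Disjointness across index sets is immediate since $\mathcal{I}$ is the support of the diagonal of $P = Q_2 Q_1$, which is recovered from $(Q_1,Q_2)$; the map $G\mapsto(Q_1,Q_2)$ and its inverse are given by rational formulas on the relevant open sets, hence it is a diffeomorphism onto its image.

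I expect the main obstacle to be this last step: converting the bare relations ``supports in $\mathcal{I}$'' and ``$KH = B_\mathcal{I}$'' into the precise pseudoinverse parameterization by $F_\mathcal{I}$, and verifying that the resulting map is onto the full critical locus (not merely into it) as well as injective. The commutant argument of the second paragraph is the clean conceptual core; everything preceding it is bookkeeping with diagonal matrices, and everything following it is a careful but elementary manipulation of supports and pseudoinverses.
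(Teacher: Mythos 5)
Your reduction to $g(Q_2,\tilde Q_1)=\|B-Q_2\tilde Q_1\|_F^2$ with $B=D^2S^{-1}$, and the commutant argument forcing $P=Q_2\tilde Q_1$ to be diagonal with $p_i\in\{0,b_i\}$, are correct — and they are essentially the paper's own argument in cleaner clothes. The paper skips the change of variables and instead shows that $D^{-2}S^2Q^\intercal$ (with $Q=Q_2Q_1$) is symmetric, idempotent, and commutes with the diagonal matrix $D^2S^{-2}D^2$ with distinct entries, hence is diagonal with entries in $\{0,1\}$; that is the same computation as your $PB^2=B^2P$ step. Up through the support constraints $\tilde Q_1=HI_\mathcal{I}^\intercal$, $Q_2=I_\mathcal{I}K$ with $KH=B_\mathcal{I}$, your argument is complete and matches the paper's.

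The obstacle you flag at the end is, however, a genuine gap, and it cannot be closed as stated: for $\ell=|\mathcal{I}|<k$ the relation $KH=B_\mathcal{I}$ does not determine $K$ from $H$. The solutions are exactly $K=B_\mathcal{I}H^++N$ with $NH=0$, an affine space of dimension $(k-\ell)\ell$, so the critical locus over $\mathcal{I}$ is the $(2k\ell-\ell^2)$-dimensional manifold $\{(H,K):KH=B_\mathcal{I}\}$, strictly larger than the $k\ell$-dimensional $F_\mathcal{I}$. Concretely, with $m=k=2$, $S=I$, $D^2=\mathrm{diag}(4,1)$ and $\mathcal{I}=\{1\}$, the pair $Q_1=\left(\begin{smallmatrix}4&0\\0&0\end{smallmatrix}\right)$, $Q_2=\left(\begin{smallmatrix}1&1\\0&0\end{smallmatrix}\right)$ satisfies both gradient equations but is not of the form $\bigl(GD_\mathcal{I}S_\mathcal{I}^{-1}I_\mathcal{I}^\intercal,\;D_\mathcal{I}S_\mathcal{I}^{-1}I_\mathcal{I}G^+\bigr)$ for any $G$; your map is into but not onto the critical locus. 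You should know that the paper's own proof makes the identical leap at the line ``$G^+=I_\mathcal{I}^\intercal S_\mathcal{I}D_\mathcal{I}^{-1}Q_2$'': equation \eqref{q_eqn} only shows that this matrix is a \emph{left inverse} of $G$, and a left inverse of a non-square full-column-rank matrix need not be the Moore--Penrose pseudoinverse. Everything is fine on the top stratum $\ell=k$ (a left inverse of an invertible matrix is its inverse), which is the stratum containing the global minima, and for $\mathcal{L}_\sigma$ the Transpose Theorem's constraint $Q_1=Q_2^\intercal$ forces $K=S_\mathcal{I}^{-1}H^\intercal$, hence $H^\intercal H=D_\mathcal{I}^2$ and $O=HD_\mathcal{I}^{-1}\in V_\mathcal{I}$, eliminating the ambiguity; so the downstream claims about minima and about the sum loss survive. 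But as a description of \emph{all} critical points of $\mathcal{L}$ and $\mathcal{L}_\pi$ with $\ell<k$, the Proposition needs the fiber of left inverses $\{K:KH=B_\mathcal{I}\}$ adjoined to the parameterization.
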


\begin{proof}[Proof of Theorem \ref{unified statement}]

Given the singular value decomposition $X = U\Sigma V^\intercal$, let $Q_1 = W_1U$ and $Q_2 = U^\intercal W_2$. By invariance of the Frobenius norm under the smooth action of the orthogonal group, we may instead parameterize the critical points of the following loss functions and then pull the result back to $W_1 = Q_1U^\intercal$ and $W_2 = UQ_2$:
\begin{align*}
\mathcal{L}(Q_1, Q_2) &= ||\Sigma - Q_2Q_1\Sigma||_F^2\\
\mathcal{L}_\pi(Q_1, Q_2) &= \mathcal{L}(Q_1, Q_2) + \lambda ||Q_2Q_1||_F^2\\
\mathcal{L}_\sigma(Q_1, Q_2) &= \mathcal{L}(Q_1, Q_2) + \lambda(||Q_1||_F^2 + ||Q_2||_F^2)
\end{align*}

$\mathcal{L}$ expands to
$$\tr(Q_2 Q_1\Sigma^2 Q_1^\intercal Q_2^\intercal -2Q_2Q_1\Sigma^2 + \Sigma^2).$$
By Proposition \ref{diagonal critical points} with $S = \Sigma$ and $D = \Sigma$, the critical points have the form
\begin{table}[H]
\centering
\begin{tabular}{@{}ccc@{}} \toprule
    & $Q_2$  & $Q_1$ \\ \midrule
    $\mathcal{L}$ & $I_\mathcal{I}G^{+}$ & $GI_\mathcal{I}^\intercal$\\ \bottomrule
\end{tabular}
\end{table}

$\mathcal{L}_\pi$ expands to
$$\tr(Q_2 Q_1 (\Sigma^2 + \lambda I) Q_1^\intercal Q_2^\intercal -2Q_2Q_1\Sigma^2 + \Sigma^2).$$
By Proposition \ref{diagonal critical points} with $S = (\Sigma^2 + \lambda I)^{\frac{1}{2}}$ and $D = \Sigma$, the critical points have the form
\begin{table}[H]
\centering
\begin{tabular}{@{}ccc@{}} \toprule
    & $Q_2$  & $Q_1$ \\ \midrule
    $\mathcal{L}_\pi$ & $I_\mathcal{I}(I_\ell + \lambda \Sigma^{-2}_\mathcal{I})^{-\frac{1}{2}}G^{+}$ & $G(I_\ell + \lambda \Sigma^{-2}_\mathcal{I})^{-\frac{1}{2}}I_\mathcal{I}^\intercal$\\ \bottomrule
\end{tabular}
\end{table}
By Lemma \ref{sum of fro} with $A = Q_2$ and $B = Q_1$, $\mathcal{L}_\sigma$ expands to the sum of two functions:
\begin{align*}
\mathcal{L}_1(Q_1, Q_2) &= \tr(Q_2 Q_1 \Sigma^2 Q_1^\intercal Q_2^\intercal -2Q_2Q_1(\Sigma^2 - \lambda I) + \Sigma^2)\\
\mathcal{L}_2(Q_1, Q_2) &= \lambda ||Q_1 - Q_2^\intercal||_F^2.
\end{align*}
So at a critical point, $\nabla \mathcal{L}_1(Q_1, Q_2) = -\nabla \mathcal{L}_2(Q_1, Q_2)$ and $Q_1 = Q_2^\intercal$ by the Transpose Theorem \ref{orthogonality}. The latter also implies $\nabla \mathcal{L}_2(Q_1, Q_2) = 0$. So the critical points of $\mathcal{L}_\sigma$ coincide with the critical points of $\mathcal{L}_1$ such that $Q_1 = Q_2^\intercal$.

By Proposition \ref{diagonal critical points} with $S = \Sigma$ and $D = (\Sigma^2 - \lambda I)^{\frac{1}{2}}$, these critical points have the form
\begin{table}[H]
\centering
\begin{tabular}{@{}ccc@{}} \toprule
    & $Q_2$  & $Q_1$ \\ \midrule
    $\mathcal{L}_\sigma$ & $I_\mathcal{I}(I_\ell - \lambda \Sigma^{-2}_\mathcal{I})^{\frac{1}{2}}O^\intercal$ & $O(I_\ell - \lambda \Sigma^{-2}_\mathcal{I})^{\frac{1}{2}}I_\mathcal{I}^\intercal$\\ \bottomrule
\end{tabular}
\end{table}
In particular, real solutions do not exist for $\sigma_i^2 < \lambda$.
\end{proof}

\subsection{Local curvature}
\label{curvature}

By Theorem \ref{unified statement}, the critical landscape is a disjoint union of smooth manifolds, each at some height. We now prove that the Hessian is non-degenerate in the normal directions to the critical landscape. As discussed in Appendix \ref{morse}, such functions are called \textit{Morse-Bott} and studied extensively in differential and algebraic topology.

\begin{theorem}[Curvature Theorem]
\label{morse-bott}
In local coordinates near any point on the critical manifold indexed by $\mathcal{I}$, all three losses take the form of a standard degenerate saddle with $d_\mathcal{I} + (k-\ell)(m-\ell)$ descending directions.
\begin{compactitem}
\item $\mathcal{L}$ and $\mathcal{L}_\pi$ have $k\ell$ flat directions.
\item  $\mathcal{L}_\sigma$ has $k\ell - \binom{\ell + 1}{2}$ flat directions.
\end{compactitem}
The remaining directions are ascending.
\end{theorem}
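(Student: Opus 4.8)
The plan is to compute, for each loss, the Hessian at a point of its critical locus and read off its inertia block by block. Theorem~\ref{unified statement} already exhibits the critical set as a disjoint union of smooth manifolds on which the loss is constant, so the tangent space to a critical component automatically lies in the kernel of the Hessian there; the flat count in the statement is then the dimension of that component, and what remains is to prove the Hessian is non-degenerate transverse to the component (the Morse-Bott property) and to count how many of those transverse directions are descending, the rest being ascending.

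Two reductions set up the computation. First, as in the proof of Theorem~\ref{unified statement}, passing from $(W_1,W_2)$ to $(Q_1,Q_2)=(W_1U,\,U^\intercal W_2)$ is an orthogonal linear change of coordinates on $\mathbb{R}^{k\times m}\times\mathbb{R}^{m\times k}$, so it preserves the inertia of every Hessian while turning $X$ into $\Sigma$; I work with the diagonalized losses. Second, each loss is invariant under its symmetry group acting by $(Q_1,Q_2)\mapsto(AQ_1,\,Q_2A^{-1})$ --- $\mathrm{GL}_k(\mathbb{R})$ for $\mathcal{L}$ and $\mathcal{L}_\pi$, $\mathrm{O}_k(\mathbb{R})$ for $\mathcal{L}_\sigma$ --- and this action is transitive on each critical component, so the Hessian inertia is constant there and it suffices to analyze one ``standard'' critical point per index set $\mathcal{I}$: the one in which $G$ (resp. $O$) is the inclusion of the first $\ell$ coordinate axes, so that $Q_2Q_1$ is a coordinate projection onto $\mathrm{span}\{e_i : i\in\mathcal{I}\}$, rescaled by the identity for $\mathcal{L}$ and by $(I_\ell-\lambda\Sigma^{-2}_\mathcal{I})$ for $\mathcal{L}_\sigma$.

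At the standard critical point I expand each loss to second order in $(\dot Q_1,\dot Q_2)$. Because $Q_1$, $Q_2$ and $\Sigma$ are all (block-)diagonal, the Hessian quadratic form is an orthogonal direct sum of $1$- and $2$-dimensional blocks indexed by a latent coordinate $a\in\{1,\dots,k\}$ and a data coordinate $i\in\{1,\dots,m\}$, grouped by membership in $\mathcal{I}$. The blocks pairing $i\in\mathcal{I}$ with $j\notin\mathcal{I}$ are $2\times2$ forms whose signature reverses exactly when $\sigma_j^2>\sigma_i^2$; these yield the $d_\mathcal{I}$ descending directions of the Grassmannian Theorem~\ref{perfect}, with the complementary directions ascending. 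The blocks with latent coordinate $a>\ell$ --- the ``rank-increasing'' directions --- pair a fresh latent slot with a data coordinate outside $\mathcal{I}$ and contribute the remaining $(k-\ell)(m-\ell)$ descending directions for $\mathcal{L}$ and $\mathcal{L}_\pi$ (for $\mathcal{L}_\sigma$, a rank-increasing move into data direction $j$ lowers the loss only when $\sigma_j^2>\lambda$, i.e. $j\le m_0$). The remaining directions --- the ``gauge'' directions inside $\mathcal{I}\times\mathcal{I}$, plus one coordinate out of each rank-increasing block --- are flat, and one checks they span exactly the tangent space to the critical component. For $\mathcal{L}_\sigma$ the clean way to see this uses the decomposition $\mathcal{L}_\sigma=\mathcal{L}_1+\lambda\|Q_1-Q_2^\intercal\|_F^2$ from the proof of Theorem~\ref{unified statement}: at a critical point $Q_1=Q_2^\intercal$ by the Transpose Theorem~\ref{orthogonality}, so the penalty term contributes (up to a positive scalar) the squared norm of the antisymmetric part of $(\dot Q_1,\dot Q_2)$, which is positive definite transverse to the symmetric slice, while on the symmetric slice $\mathcal{L}_\sigma$ restricts to $\mathcal{L}_1$, whose critical structure is governed by Proposition~\ref{diagonal critical points} with $D=(\Sigma^2-\lambda I)^{\frac12}$; the symmetric directions inside $\mathcal{I}\times\mathcal{I}$ that are flat for $\mathcal{L}$ thereby become ascending for $\mathcal{L}_\sigma$, which is exactly the $\binom{\ell+1}{2}$ discrepancy between the two flat counts. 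Combining the per-block inertias gives the stated counts, and non-degeneracy of every non-gauge block is the Morse-Bott conclusion.

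The main obstacle is the explicit second-order expansion and its block decomposition, and within it two points that need care: showing that the kernel of the assembled Hessian coincides with the tangent space to the critical manifold of Theorem~\ref{unified statement} --- not merely contains it --- so that the loss is genuinely Morse-Bott; and, for $\mathcal{L}_\sigma$, handling the fact that the Hessians of $\mathcal{L}_1$ and of the penalty do not individually respect the symmetric/antisymmetric splitting, so the argument that the antisymmetric directions are transverse to the critical locus and carry no new null or negative directions must be run on the explicit quadratic form rather than by an abstract splitting. Matching $d_\mathcal{I}=\sum_j(i_j-j)$ to the number of sign-reversing $2\times2$ blocks is the remaining combinatorial identity, and it is precisely the count of pairs already recorded after the Grassmannian Theorem.
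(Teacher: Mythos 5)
Your plan --- diagonalize by the orthogonal change of variables $(Q_1,Q_2)=(W_1U,\,U^\intercal W_2)$, use the $\mathrm{GL}_k(\mathbb{R})$ or $\mathrm{O}_k(\mathbb{R})$ symmetry to reduce to one standard critical point per $\mathcal{I}$, and read off the inertia of the Hessian block by block --- is the honest, computational version of the paper's proof, which performs exactly the same classification of directions (Grassmannian rotations giving $d_\mathcal{I}$, rank-increasing moves giving $(k-\ell)(m-\ell)$, gauge directions tangent to $F_\mathcal{I}$ or $V_\mathcal{I}$, and for $\mathcal{L}_\sigma$ the extra symmetry-breaking ascents) but never writes down the quadratic form. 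Your identification of the $2\times 2$ blocks whose signature flips according to whether $\sigma_j^2$ exceeds $\sigma_{i_a}^2$, and of the $\binom{\ell+1}{2}$ discrepancy as coming from the penalty $\lambda\|Q_1-Q_2^\intercal\|_F^2$, is correct and consistent with the paper's intent.

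The difficulty is that the step you single out as the crux --- that the Hessian kernel equals, and does not merely contain, the tangent space to the critical manifold --- actually fails for $\mathcal{L}$ and $\mathcal{L}_\pi$ at rank-deficient critical points with $0<\ell<k$, so carrying out your plan will contradict rather than confirm the stated flat counts there. Concretely, take $m=k=2$, $\mathcal{I}=\{1\}$, the standard critical point $Q_1=Q_2=e_1e_1^\intercal$, and perturb only $(\dot Q_1)_{21}=s$ and $(\dot Q_2)_{12}=t$ (an unused latent slot $a=2>\ell$ paired with a data coordinate $j=1\in\mathcal{I}$). Then $Q_2Q_1=\mathrm{diag}(1+st,\,0)$ and $\mathcal{L}=\sigma_1^2s^2t^2+\sigma_2^2$ is purely quartic, so both directions lie in the Hessian kernel while only the diagonal $s=t$ is tangent to $F_\mathcal{I}$ (the antidiagonal ascends, but only at fourth order). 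In general the $2\ell(k-\ell)$ coordinates pairing $a>\ell$ with $j\in\mathcal{I}$ are absent from the second-order expansion, so the kernel has dimension $2k\ell-\ell^2>k\ell$; $\mathcal{L}$ and $\mathcal{L}_\pi$ are not Morse--Bott at these points and admit no local normal form with only $k\ell$ flat coordinates. (For $\mathcal{L}_\sigma$ the term $\lambda(\|\dot Q_1\|_F^2+\|\dot Q_2\|_F^2)$ renders these blocks nondegenerate and your count goes through.) Relatedly, your own parenthetical observation that a rank-increasing move into direction $j$ descends only when $\sigma_j^2>\lambda$ shows the descending count for $\mathcal{L}_\sigma$ should be $d_\mathcal{I}+(k-\ell)(m_0-\ell)$ rather than $d_\mathcal{I}+(k-\ell)(m-\ell)$. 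These are defects of the theorem statement and of the paper's one-paragraph proof as much as of your proposal, but you should expect your verification to surface them rather than close them; the cases $\ell=k$ and $\ell=0$, including every global minimum, are the ones where your argument is complete as written.
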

\begin{proof}
In addition to the descending directions of $\mathcal{L}_X$, there are $(k-\ell)(m-\ell)$ more that correspond to scaling one of $k-\ell$ remaining slots in $G$ or $O$ toward one of $m-\ell$ available principal directions\footnote{In Figure \ref{one dimensional}c, these two gradient trajectories descend from the yellow saddle at $0$ to the two red minima at $\pm u_1$.}. For $\mathcal{L}$ and $\mathcal{L}_\pi$, the ascending directions are the $k(m-k) - d_\mathcal{I}$ ascending directions of $\mathcal{L}_X$; for $\mathcal{L}_\sigma$, an additional $\binom{\ell + 1}{2}$ ascending directions preserve the reconstruction term while increasing the regularization term by decreasing orthogonality. The remaining (flat) directions are tangent to the critical manifold, itself diffeomorphic to the manifold of (orthonormal) $l$-frames in $\mathbb{R}^k$.
\end{proof}

\section{Empirical Illustration}
\label{secemp}

In this section, we illustrate the Landscape Theorem \ref{unified statement} by training an LAE on synthetic data and visualizing properties of the learned weight matrices. See Appendix \ref{mnist experiments} for experiments on real data. Corollary \ref{morse-bott} implies that gradient descent and its extensions will reach a global minimum, regardless of the initialization, if trained for a sufficient number of epochs with a small enough learning rate \citep{zhu}.

\subsection{Synthetic data}
\label{synetheic experiments}

\begin{figure}[tb]
\begin{center}
\centerline{\includegraphics[width=\columnwidth]{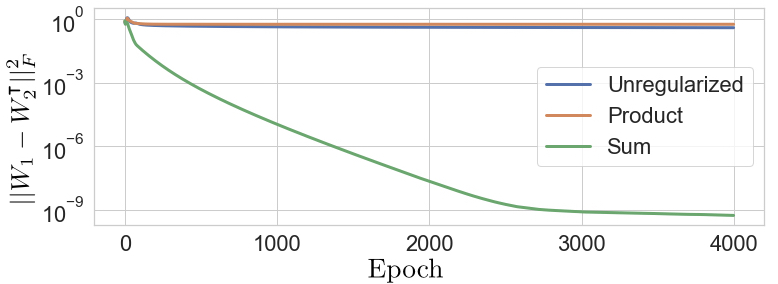}}
\caption{Distance between $W_1$ and $W_2^\intercal$ during training.}
\label{difference}
\end{center}
\vskip -0.2in
\end{figure}

\begin{figure}[tb]
\begin{center}
\begin{subfigure}{0.32\linewidth}
  \includegraphics[width=\columnwidth]{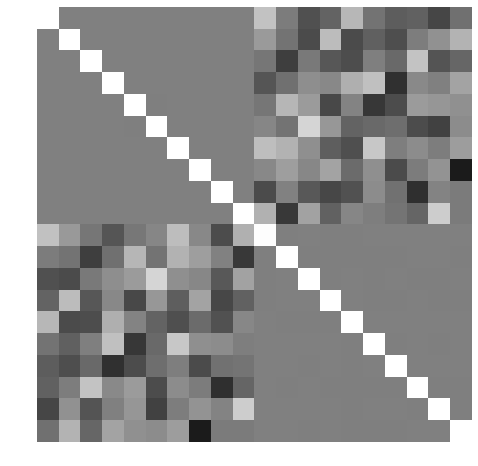}
  \caption{Unregularized}
\end{subfigure}
\begin{subfigure}{0.32\linewidth}
  \includegraphics[width=\columnwidth]{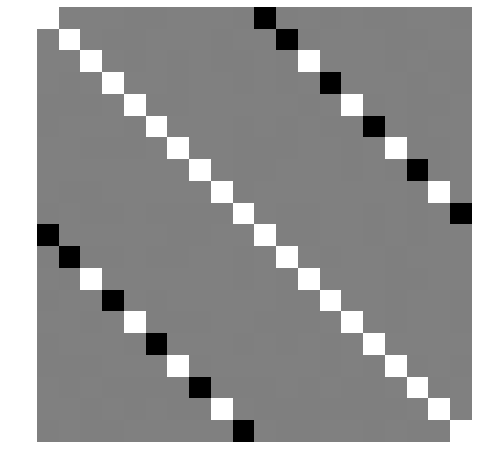}
  \caption{Product}
\end{subfigure}
\begin{subfigure}{0.32\linewidth}
  \includegraphics[width=\columnwidth]{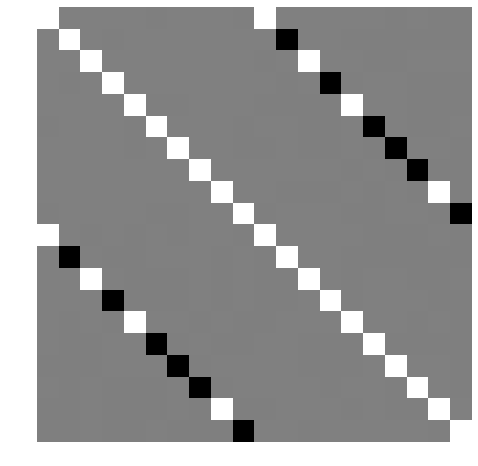}
  \caption{Sum}
\end{subfigure}
\caption{Heat map of the matrix $\begin{bmatrix}U & V_*\end{bmatrix}^\intercal \begin{bmatrix}U & U_*\end{bmatrix}$. Black and white correspond to $-1$ and $1$, respectively.}
\label{singular vectors}
\end{center}
\vskip -0.2in
\end{figure}

\begin{figure}[tb]
\begin{center}
\begin{subfigure}{0.49\linewidth}
  \centerline{
  \includegraphics[width=\columnwidth]{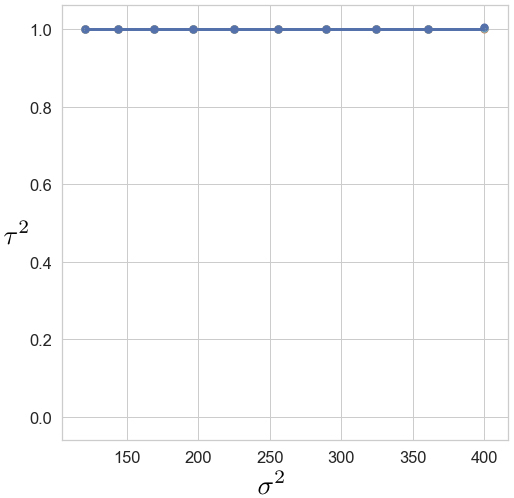}
  }
  \caption{Unregularized}
\end{subfigure}
\begin{subfigure}{0.49\linewidth}
  \centerline{
  \includegraphics[width=\columnwidth]{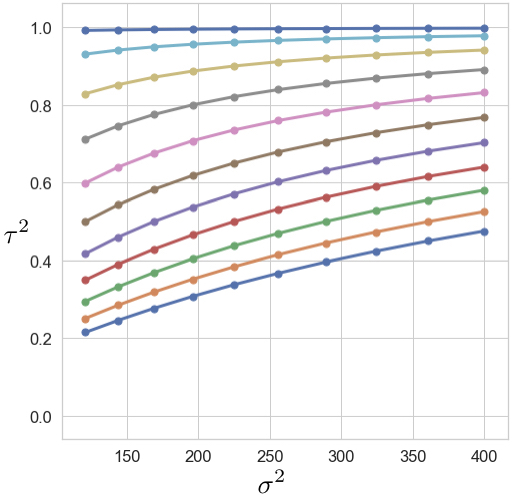}
  }
  \caption{Product}
\end{subfigure}
\begin{subfigure}{0.74\linewidth}
  \centerline{
  \includegraphics[width=\columnwidth]{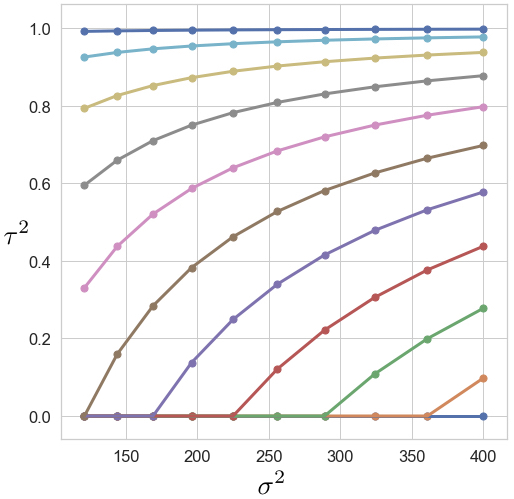}
  }
  \caption{Sum}
\end{subfigure}
\begin{subfigure}{0.25\linewidth}
  \centerline{
  \includegraphics[width=0.8\columnwidth]{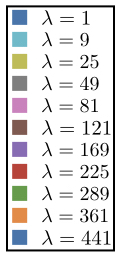}
  }
    \caption*{}
\end{subfigure}
\caption{Illustration of the relationship between the eigenvalues of the weight matrix ($\tau^2$) and data matrix ($\sigma^2$) for various values of $\lambda$. Points are the empirical and lines are theoretical.}
\label{singular values}
\end{center}
\vskip -0.2in
\end{figure}

In the following experiments, we set $k = \lambda = 10$ and fix a data set $X \in \mathbb{R}^{20\times20}$ with singular values $\sigma_i = i$ and random left and right singular vectors under Haar measure. We train the LAE for each loss using the Adam optimizer for $4000$ epochs with random normal initialization, full batch, and learning rate $0.05$.

Figure \ref{difference} tracks the squared distance between $W_1$ and $W_2^\intercal$ during training. Indeed, only the the sum loss pulls the encoder and (transposed) decoder together as claimed in the Transpose Theorem \ref{orthogonality}.

Let $W_*$ be the product $W_2 W_1$ after training and fix the singular value decompositions
$$X = U \Sigma V^\intercal, \quad W_* = U_* \Sigma_* V_*^\intercal.$$
For each loss, the heat map of
$$\begin{bmatrix}
U^\intercal U & U^\intercal U_* \\
V_*^\intercal U & V_*^\intercal U_*
\end{bmatrix}$$
in Figure \ref{singular vectors} is consistent with $W_*$ approximating a global minimum defined by the Landscape Theorem. Namely, the lower right quadrant shows $U_* \approx V_*$ for each loss and the upper right and lower left quadrants show $U \approx U_*$ and $U \approx V_*$ up to column sign for the product and sum losses, \textit{but not for the unregularized loss}. That is, for the product and sum losses, the left singular vectors of $X$ are obtained as the right and left singular vectors of $W_*$.

The Landscape Theorem also gives explicit formulae for the eigenvalues of $W_*$ at convergence. Letting $\sigma_i^2$ and $\tau_i^2$ be the $i^{th}$ largest eigenvalues of $XX^\intercal$ and $W_*$, respectively, in Figure \ref{singular values} we plot the points
$\left(\sigma_i^2, \tau_i^2\right)$
for many values of $\lambda$. We superimpose a curve for each value of $\lambda$ defined by the theoretical relationship between $\sigma_i^2$ and $\tau_i^2$ in the Landscape Theorem. The (literal) alignment of theory and practice is visually perfect.

\section{Implications}
\label{secdisc}

\subsection{PCA algorithms}

The Landscape Theorem for $\mathcal{L}_\sigma$ implies those principal directions of $X$ with eigenvalues greater than $\lambda$ coincide with the top left singular vectors of the trained decoder. Hence we can perform PCA by training a regularized LAE and computing SVD of the decoder. For example, full-batch gradient descent corresponds to the following algorithm, which is guaranteed to converge to a (global) minimum for sufficiently small learning rate by the Curvature Theorem.

\begin{algorithm}[H]
  \caption{LAE--PCA}
  \label{lae-pca}
    \begin{algorithmic}
      \STATE {\bfseries input} $X \in \mathbb{R}^{m \times n}$; $k \leq m$; $\lambda, \alpha > 0$
      \STATE {\bfseries initialize} $W_1, W_2^\intercal \in \mathbb{R}^{k \times m}$
      \STATE {\bfseries while not converged}
      \STATE \quad $W_1 \mathrel{-}= \alpha \left( W_2^\intercal(W_2W_1 - I)XX^\intercal + \lambda W_1\right)$
      \STATE \quad $W_2 \mathrel{-}= \alpha \left( (W_2W_1 - I)XX^\intercal W_1^\intercal + \lambda W_2\right)$
      \STATE $U, \Sigma, \underline{\hspace{2mm}} = \mathrm{SVD}(W_2)$
\STATE {\bfseries return} $U, \ \lambda(I - \Sigma^2)^{-1}$
    \end{algorithmic}
\end{algorithm}

Note the SVD step is trivial because the decoder is only $m \times k$ dimensional with $k \ll n$. Hence optimizing this formulation of PCA reduces to optimizing the training of a regularized LAE. We have so far explored several simple ideas for optimizing performance\footnote{NumPy implementations and benchmarks of these ideas are available on \href{https://github.com/danielkunin/Regularized-Linear-Autoencoders}{GitHub}.}:

\begin{compactitem}
    \item \textit{First order.} The Transpose Theorem suggests ``tying" $W_1 = W_2^\intercal$ \textit{a priori}. In fact, for $\lambda=0$ and $k=1$, the gradient update for $\mathcal{L}_\sigma$ with tied weights is equivalent to Oja's rule for how neurons in the brain adapt synaptic strength \citep{Oja}. See Appendix \ref{oja proof} for a derivation.
    \item \textit{Second order.} The loss is convex in each parameter fixing the other, so $W_1$ and $W_2$ may be updated exactly per iteration by solving a system of $m$ or $k$ linear equations respectively.
\end{compactitem}

While many other methods have been proposed for recovering principal components with neural networks, they all require specialized algorithms for iteratively updating weights, similar to classical numerical SVD approaches \citet{Warmuth,Feng1,Feng2}. By contrast, reducing PCA to a small SVD problem by training a regularized LAE more closely parallels randomized SVD \citep{Halko}. We hope others will join us in investigating how far one can push the performance of \text{LAE-PCA}.

\subsection{Neural alignment}

In gradient descent via backpropagation, the weight matrix before a layer is updated by an error signal that is propagated by the transposed weight matrix after the layer. In the brain, there is no known physical mechanism to reuse feedforward connections for feedback or to enforce weight symmetry between distinct forward and backward connections. The latter issue is known as the \textit{weight transport problem} \citep{Grossberg} and regarded as central to the implausibility of backpropagation in the brain. Recently \citet{feedback} showed that forward weights can sufficiently align to fixed, random feedback weights to support learning in shallow networks, but \citet{Bartunov} demonstrated that this \textit{feedback alignment} and other biologically-plausible architectures break down for deep networks.

We are now investigating two approaches to weight transport inspired by the results herein. First, the Transpose Theorem \ref{orthogonality} suggests that weight symmetry emerges dynamically when feedback weights are updated to optimize forward-backward reconstruction between consecutive layers with weight decay. This \textit{information alignment} algorithm balances task prediction, information transmission, and energy efficiency. Second, Lemma \ref{sum of fro} expresses weight symmetry as balancing weight decay and self-amplification, lending greater biological plausibility to \textit{symmetric alignment}. We have verified that both information and (not surprisingly) symmetric alignment indeed align weights and are competitive with backprop when trained on MNIST and CIFAR10. In parallel, \citet{lillicrap} has independently shown that similar forms of local dynamic alignment scale to ImageNet.

\subsection{Morse homology}

In Appendix \ref{morse} we expand from the algebraic topology of learning PCA to that of learning by gradient descent in general. For example, we explain how Morse homology provides a principled foundation for the empirical observation of low-lying valley passages between minima in \citet{Garipov}. We are hopeful that this perspective will yield insights for efficient training and improved robustness and interpretation of consensus representations and ensemble predictions, for non-convex models arising in matrix factorization and deep learning.

\section{Conclusion}

In 1989, \citet{baldi1989} characterized the loss landscape of an LAE. In 2018, \citet{zhou} characterized the loss landscape of an autoencoder with ReLU activations on a single hidden layer. This paper fills out and ties together the rich space of research on linear networks over the last forty years by deriving from first principles a unified characterization of the loss landscapes of LAEs with and without regularization, while introducing a rigorous topological lens. By considering a simple but fundamental model with respect to regularization, orthogonality, and Morse homology, this work also suggests new principles and algorithms for learning.

\clearpage
\appendix
\begin{section}{Deferred Proofs}
\begin{proof}[\textbf{Proof of Proposition \ref{diagonal critical points}}]
Setting the gradient to zero:
\begin{align}
\label{grad1}
\frac{\partial \mathcal{L}_*}{\partial Q_1} = 0 &\implies Q_2^\intercal Q_2Q_1S^2 = Q_2^\intercal D^2\\
\label{grad2}
\frac{\partial \mathcal{L}_*}{\partial Q_2} =0 &\implies  Q_2Q_1S^2Q_1^\intercal = D^2Q_1^\intercal
\end{align}
Let $Q = Q_2Q_1$. Multiplying \eqref{grad1} on the right by $D^{-2}$ and on the left by $D^{-2}S^2Q_1^\intercal$ gives
$$D^{-2}S^2Q^\intercal QS^2D^{-2} = D^{-2}S^2Q^\intercal,$$
which implies $D^{-2}S^2Q^\intercal$ is symmetric and idempotent. Multiplying \eqref{grad2} on the right by $Q_2^\intercal$ gives
$$Q S^2 Q^\intercal = D^2Q^\intercal,$$
which can be rewritten as
$$Q S^2 Q^\intercal = \left(D^2S^{-2}D^2\right)\left(D^{-2}S^2Q^\intercal\right).$$
Since the left-hand side is symmetric, $D^{-2}S^2Q^\intercal$ is diagonal and idempotent by Lemma \ref{commute diagonal} with $A = D^{-2}S^2Q^\intercal$ and $B = D^2S^{-2}D^2$. Lemma \ref{diagonal idempotent} with the same $A$ implies there exists an index set $\mathcal{I}$ of size $\ell$ with $0 \leq \ell \leq k$ such that
$$D^{-2}S^2Q^\intercal = I_\mathcal{I} I_\mathcal{I}^\intercal$$
and hence
\begin{equation}
\label{q_eqn}
Q = I_\mathcal{I}I_\mathcal{I}^\intercal D^2S^{-2} = I_\mathcal{I}D_\mathcal{I}^2S_\mathcal{I}^{-2}I_\mathcal{I}^\intercal.
\end{equation}
Consider the smooth map $(Q_1, Q_2) \mapsto (\mathcal{I}, G)$ with 
$$G = Q_1S_\mathcal{I}D_\mathcal{I}^{-1}I_\mathcal{I}$$
from the critical submanifold of $\mathbb{R}^{k \times m} \times \mathbb{R}^{m \times k}$ to the manifold of pairs $(\mathcal{I}, G)$ with $G$ full-rank.
Note $$G^+ = I_\mathcal{I}^\intercal S_\mathcal{I}D_\mathcal{I}^{-1}Q_2$$ by \eqref{q_eqn}. Commuting diagonal matrices to rearrange terms in \eqref{grad1} and \eqref{grad2}, we obtain a smooth inverse map from pairs $(\mathcal{I}, G)$ to critical points:
\begin{align*}
Q_1 &= Q_1S^2Q^\intercal D^{-2} = Q_1 I_\mathcal{I} I_\mathcal{I}^\intercal = G I_\mathcal{I}^\intercal D_\mathcal{I} S_\mathcal{I}^{-1},\\
Q_2 &= D^{-2}S^2Q^\intercal Q_2 = I_\mathcal{I} I_\mathcal{I}^\intercal Q_2 = D_\mathcal{I} S_\mathcal{I}^{-1} I_\mathcal{I} G^+.
\end{align*}
\end{proof}

\begin{proof}[\textbf{Equating bias parameters and mean centering.}]
\label{bias and mean centering}

Consider the loss function
$$\mathcal{L}_\beta(W_1,W_2,b_1,b_2) = ||X - W_2(W_1X + b_1e_n^\intercal) + b_2e_n^\intercal||_F^2,$$
where $b_1 \in \mathbb{R}^k$ and $b_2 \in \mathbb{R}^m$ are bias vectors and $e_n \in \mathbb{R}^n$ is the vector of ones. With $b = W_2b_1 + b_2$, $\mathcal{L}_b$ becomes
\begin{equation}
\label{eqb}
||X - W_2W_1X - be_n^\intercal||_F^2.
\end{equation}
At a critical point,
$$\frac{\partial\mathcal{L}_\beta}{\partial b} = 2(X - W_2W_1X - be_n^\intercal)e_n = 0,$$
which implies
$$b = \frac{1}{n}Xe_n - W_2W_1\frac{1}{n}Xe_n.$$
Substituting into \eqref{eqb}, $\mathcal{L}_b$ reduces to 
$$||\bar X - W_2W_1\bar X||_F^2$$
with $\bar X = X - \frac{1}{n}Xe_ne_n^\intercal$. Thus, at the optimal bias parameters, $\mathcal{L}_\beta$ with $X$ is equivalent to $\mathcal{L}$ with $X$ mean-centered.
\end{proof}

\begin{lemma}
\label{sum of fro}
Let $A \in \mathbb{R}^{m \times k}$ and $B \in \mathbb{R}^{k \times m}$, then
$$||A||_F^2 + ||B||_F^2 = ||A - B^\intercal||_F^2 + 2 tr(AB)$$
\end{lemma}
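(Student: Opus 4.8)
The plan is to prove this by directly expanding the Frobenius norm on the right-hand side in terms of traces and collecting terms. Recall that for any real matrix $M$ we have $\|M\|_F^2 = \tr(M^\intercal M)$. Applying this with $M = A - B^\intercal$ (note $A - B^\intercal$ is $m \times k$, so this is well-defined), I would write
$$\|A - B^\intercal\|_F^2 = \tr\big((A - B^\intercal)^\intercal(A - B^\intercal)\big) = \tr(A^\intercal A) - \tr(A^\intercal B^\intercal) - \tr(B A) + \tr(B B^\intercal).$$
The first and last terms are exactly $\|A\|_F^2$ and $\|B\|_F^2$, so it remains to show the two middle terms together equal $-2\tr(AB)$.

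For the cross terms, I would use two elementary properties of the trace: invariance under transposition, $\tr(N) = \tr(N^\intercal)$, and cyclic invariance, $\tr(NP) = \tr(PN)$ whenever both products make sense. The first gives $\tr(A^\intercal B^\intercal) = \tr\big((BA)^\intercal\big) = \tr(BA)$, and the second gives $\tr(BA) = \tr(AB)$; likewise $\tr(BA) = \tr(AB)$ directly. Hence $-\tr(A^\intercal B^\intercal) - \tr(BA) = -2\tr(AB)$, and substituting back yields
$$\|A - B^\intercal\|_F^2 = \|A\|_F^2 + \|B\|_F^2 - 2\tr(AB),$$
which rearranges to the claimed identity.

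There is essentially no obstacle here — the only thing to be careful about is bookkeeping of transposes and making sure each matrix product is conformable (which it is, since $A$ is $m \times k$ and $B$ is $k \times m$, so $AB$ is $m \times m$, $BA$ is $k \times k$, and both have well-defined traces). I would present the computation as a short display chain and then state the rearrangement, with a one-line remark that the identity holds over $\mathbb{R}$ because we never needed anything beyond bilinearity of the trace and the two symmetry properties above.
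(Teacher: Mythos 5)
Your proposal is correct and follows essentially the same route as the paper: expand $\|A - B^\intercal\|_F^2 = \tr\bigl((A - B^\intercal)^\intercal(A - B^\intercal)\bigr)$, identify the diagonal terms as $\|A\|_F^2$ and $\|B\|_F^2$, and combine the cross terms into $-2\tr(AB)$ using transpose and cyclic invariance of the trace. The only difference is that you spell out the trace manipulations more explicitly than the paper does.
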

\begin{proof}
\begin{align*}
||A - B^\intercal||_F^2 &= \tr((A - B^\intercal)^\intercal (A - B^\intercal))\\
&= \tr(A^\intercal A - A^\intercal B^\intercal - BA + B^\intercal B)\\
&= ||A||_F^2 + ||B||_F^2 - 2tr(AB)
\end{align*}
\end{proof}

\begin{lemma}
\label{commute diagonal}
Let $A, B \in \mathbb{R}^{m \times m}$ with $B$ diagonal with distinct diagonal elements. If $AB = BA$ then $A$ is diagonal.
\end{lemma}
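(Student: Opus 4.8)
The plan is to argue entrywise. Write $B = \mathrm{diag}(b_1, \ldots, b_m)$ with the $b_i$ pairwise distinct, and let $A = (a_{ij})$. First I would compute the $(i,j)$ entry of each side of the commutation relation using the fact that multiplying by a diagonal matrix simply rescales columns (on the right) or rows (on the left): $(AB)_{ij} = a_{ij} b_j$ while $(BA)_{ij} = b_i a_{ij}$. Setting $AB = BA$ entry by entry therefore yields $(b_j - b_i)\, a_{ij} = 0$ for every pair of indices $(i,j)$.

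Next, for $i \ne j$ the hypothesis that $B$ has distinct diagonal entries gives $b_j - b_i \ne 0$, so $a_{ij} = 0$; the diagonal entries $a_{ii}$ are unconstrained. Hence every off-diagonal entry of $A$ vanishes, i.e.\ $A$ is diagonal, which is the claim. There is no genuine obstacle here — the lemma is an immediate consequence of the product rule for a matrix times a diagonal matrix — so the only point requiring care is bookkeeping which factor rescales rows versus columns, and restricting the conclusion $a_{ij}=0$ to the off-diagonal indices $i \ne j$ where $b_i \ne b_j$.
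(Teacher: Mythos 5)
Your proof is correct and is essentially identical to the paper's: both compute $(AB)_{ij} = a_{ij}b_j$ and $(BA)_{ij} = b_i a_{ij}$, set the difference to zero, and use the distinctness of the diagonal entries to conclude $a_{ij} = 0$ for $i \neq j$. No further comment is needed.
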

\begin{proof}
Expand the difference of $(i,j)$ elements:
\begin{align*}
    (AB)_{ij} - (BA)_{ij} &= a_{ij}b_{jj} - b_{ii}a_{ij}\\
    &= a_{ij}(b_{jj} - b_{ii}) = 0.
\end{align*}
So for $i \neq j$, $b_{ii} \neq b_{jj}$ implies $a_{ij} = 0$.
\end{proof}

\begin{lemma}
\label{diagonal idempotent}
If $A \in \mathbb{R}^{m \times m}$ is diagonal and idempotent then $a_{ii} \in \{0,1\}$.
\end{lemma}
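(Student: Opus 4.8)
The plan is to exploit the fact that, for diagonal matrices, squaring acts entrywise on the diagonal. First I would note that if $A$ is diagonal then so is $A^2$, and its $(i,i)$ entry is simply $a_{ii}^2$ (there is no cross-term sum $\sum_j a_{ij}a_{ji}$ to worry about, precisely because the off-diagonal entries vanish). Idempotence $A^2 = A$ then reads $a_{ii}^2 = a_{ii}$ for every $i$, equivalently $a_{ii}(a_{ii} - 1) = 0$. Since $\mathbb{R}$ has no zero divisors, one of the two factors must vanish, giving $a_{ii} \in \{0,1\}$ as claimed.

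There is essentially no obstacle: the only point requiring (minimal) care is the observation that the diagonal hypothesis is exactly what reduces $(A^2)_{ii}$ to $a_{ii}^2$, and this is immediate from the definition of matrix multiplication. As an alternative one could argue that an idempotent matrix is a projection, hence diagonalizable with spectrum contained in $\{0,1\}$, and that the eigenvalues of a diagonal matrix are its diagonal entries; but the direct entrywise computation above is shorter and self-contained, so that is the route I would take.
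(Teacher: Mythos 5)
Your argument is exactly the paper's: diagonality reduces $(A^2)_{ii}$ to $a_{ii}^2$, idempotence gives $a_{ii}(a_{ii}-1)=0$, and the factorization forces $a_{ii}\in\{0,1\}$. Correct and essentially identical to the paper's one-line proof.
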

\begin{proof}
$0 = (AA - A)_{ii} = a_{ii}^2 - a_{ii} = a_{ii}(a_{ii} - 1).$
\end{proof}

\begin{proof}[\textbf{Relationship between Oja's rule and LAE-PCA}]
\label{oja proof}
The update step used in Oja's rule is
$$\nabla w = \alpha (xy - wy^2),$$
where $\alpha$ is a fixed learning rate, $x, w \in \mathbb{R}^m$ and $y = x^\intercal w$. Substituting $y$ into this update and factoring out $xx^\intercal w$ on the right gives,
$$\nabla w = \alpha (1 - ww^\intercal )xx^\intercal w,$$
which is the (negative) gradient for an unregularized LAE with tied weights in the $k = 1$ case.
\end{proof}

\end{section}

\begin{section}{Positive (semi-)definite matrices}
\label{definite}

We review positive definite and semi-definite matrices as needed to prove the Transpose Theorem (\ref{orthogonality}).

\begin{definition}
A real, symmetric matrix $A$ is positive semi-definite, denoted $A \succeq 0$, if $x^\intercal A x \ge 0$ for all vectors $x$. $A$ is positive definite, denoted $A \succ 0$, if the inequality is strict.

The Loewner partial ordering of positive semi-definite matrices defines $A \succeq B$ if $A - B \succeq 0$.
\end{definition}

\begin{lemma}
\label{psdprops}
The following properties hold.
\begin{enumerate}
    \item If $\lambda > 0$ then $\lambda I \succeq 0$.
    \item If $A \succeq 0$ then $B A B^\intercal \succeq 0$ for all $B$.
    \item If $A \succeq 0$ and $B \succeq 0$ then $A + B \succeq 0$.
    \item If $A \succ 0$ and $B \succeq 0$ then $A + B \succ 0$.
    \item If $A \succeq 0$ and $AB^\intercal \succeq BA B^\intercal$ then $A \succeq BA$.
    \item If $B \succ 0$ and $A^\intercal B A = 0$ then $A = 0$.
\end{enumerate}
\end{lemma}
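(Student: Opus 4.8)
The plan is to verify the six claims in order; items (1)--(4) are one-line quadratic-form computations, item (6) is a short column argument, and essentially all of the content sits in item (5).

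For (1), $x^\intercal(\lambda I)x = \lambda\|x\|^2 \ge 0$. For (2), $x^\intercal(BAB^\intercal)x = (B^\intercal x)^\intercal A (B^\intercal x) \ge 0$ since $A \succeq 0$. For (3) and (4), $x^\intercal(A+B)x = x^\intercal A x + x^\intercal B x$, which is $\ge 0$ in case (3), and $> 0$ for $x \ne 0$ in case (4) because the first summand is then strictly positive. I would also record here that $\succeq$ is transitive: if $A \succeq B$ and $B \succeq C$ then $A - C = (A-B) + (B-C)$ is a sum of psd matrices, hence psd by (3). This observation is what lets me chain inequalities in (5).

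For (6), write the columns of $A$ as $a_j = A e_j$. Then $(A^\intercal B A)_{jj} = a_j^\intercal B a_j = 0$, and since $B \succ 0$ this forces $a_j = 0$ for every $j$, so $A = 0$. (Equivalently, factor $B = R^\intercal R$ with $R$ invertible; then $0 = A^\intercal B A = (RA)^\intercal(RA)$ has vanishing Frobenius norm, so $RA = 0$ and $A = 0$.)

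The crux is (5). The hypothesis $AB^\intercal \succeq BAB^\intercal$ presupposes $AB^\intercal$ is symmetric, and combined with $A = A^\intercal$ this gives $BA = (AB^\intercal)^\intercal = AB^\intercal$, so the conclusion $A \succeq BA$ is at least well-posed. Now apply (2) to $A \succeq 0$ with the matrix $I - B$: expanding $(I-B)A(I-B)^\intercal = A - AB^\intercal - BA + BAB^\intercal \succeq 0$ yields $A + BAB^\intercal \succeq AB^\intercal + BA = 2BA$. Separately, the hypothesis says $BA - BAB^\intercal \succeq 0$, hence $2BA - BAB^\intercal = BA + (BA - BAB^\intercal) \succeq BA$. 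Chaining these through transitivity gives $A \succeq 2BA - BAB^\intercal \succeq BA$, as claimed. I expect the only real step of insight to be deciding to expand $(I-B)A(I-B)^\intercal$ rather than trying to manipulate the hypothesis $AB^\intercal \succeq BAB^\intercal$ directly; after that the argument is bookkeeping against the definition.
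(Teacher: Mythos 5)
Your proposal is correct and matches the paper's argument: for item (5) the paper uses the single identity $A - BA = (B-I)A(B-I)^\intercal + (AB^\intercal - BAB^\intercal)$, which is exactly your expansion of $(I-B)A(I-B)^\intercal \succeq 0$ combined with the hypothesis, just written as a sum of two psd terms rather than a two-step chain through transitivity. Items (1)--(4) and (6), which the paper defers to a standard reference, are handled by you with the standard quadratic-form and column arguments, all correct.
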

\begin{proof}
Property 5 follows from Properties 1 and 2 and
$$A - BA = (B - I) A (B - I)^\intercal + (AB^\intercal - B A B^\intercal).$$
The other properties are standard exercises; see Appendix C of \citep{bos} for a full treatment.
\end{proof}
\end{section}

\begin{section}{Denoising and contractive autoencoders}
\label{denoising}

Here we connect regularized LAEs to the linear case of denoising (DAE) and contrastive (CAE) autoencoders.

A linear DAE receives a corrupted data matrix $\tilde{X}$ and is trained to reconstruct $X$ by minimizing
$$\mathcal{L}_{\text{DAE}}(W_1,W_2) = ||X - W_2W_1\tilde{X}||_F^2.$$
As shown in \citet{pretorius}, if $\tilde{X} = X + \epsilon$ is the corrupting process, where $\epsilon \in \mathbb{R}^{m\times n}$ is a noise matrix with elements sampled iid from a distribution with mean zero and variance $s^2$, then
$$\mathbb{E}\left[\mathcal{L}_{\text{DAE}}\right] = \frac{1}{2n} \sum_{i=1}^n ||x_i - W_2W_1x_i||^2 + \frac{s^2}{2}\tr(W_2W_1W_1^\intercal W_2^\intercal).$$
With $\lambda = ns^2$, we have
$$\mathbb{E}\left[\mathcal{L}_{\text{DAE}}\right] = \frac{1}{2n}\mathcal{L}_\pi.$$

The loss function of a linear CAE includes a penalty on the derivative of the encoder:
$$\mathcal{L}_{\text{CAE}}(W_1,W_2) = \mathcal{L}(W_1,W_2) + \gamma||J_f(x)||_F^2.$$
As shown in \citet{rifai}, if the encoder and decoder are \text{tied} by requiring $W_1 = W_2^\intercal$, then $\mathcal{L}_{\text{CAE}}$ equals $\mathcal{L}_\sigma$ with $\lambda = \frac{\gamma}{2}$:
$$\mathcal{L}_{\text{CAE}}(W_1) = \mathcal{L}_\sigma(W_1, W_1^\intercal).$$

\end{section}

\begin{section}{Further empirical exploration}

The Landscape Theorem also gives explicit forms for the trained encoder ${W_1}_*$ and decoder ${W_2}_*$ such that the matrices
\begin{equation}
    \label{transformations A and B}
    A = \Sigma_*^{-\frac{1}{2}}U^\intercal {W_2}_* \quad \text{and} \quad B = {W_1}_* U \Sigma_*^{-\frac{1}{2}}
\end{equation}
satisfy $AB = I_k$ for all losses and are each orthogonal for the sum loss. In Figure \ref{unit circle}, we illustrate these properties by applying the linear transformations $A$, $B$, and $AB$ to the unit circle $\mathbb{S}^1 \subset \mathbb{R}^2$.  Non-orthogonal transformations deform the circle to an ellipse, whereas orthogonal transformations (including the identity) preserve the unit circle.  This experiment used the same setup described in \ref{synetheic experiments} with $k = 2$.

\begin{figure}[tb]
\begin{center}
\begin{subfigure}{0.32\linewidth}
  \centerline{
  \includegraphics[width=\columnwidth]{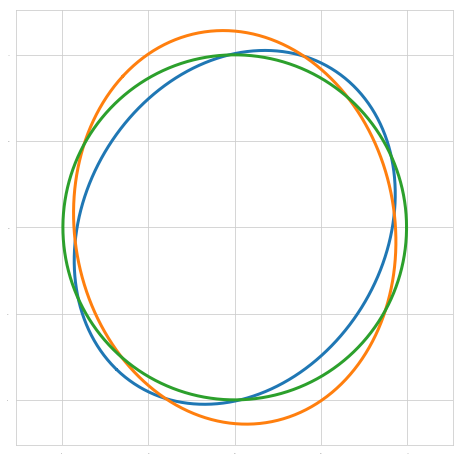}
  }
  \caption{Unregularized}
\end{subfigure}
\begin{subfigure}{0.32\linewidth}
  \centerline{
  \includegraphics[width=\columnwidth]{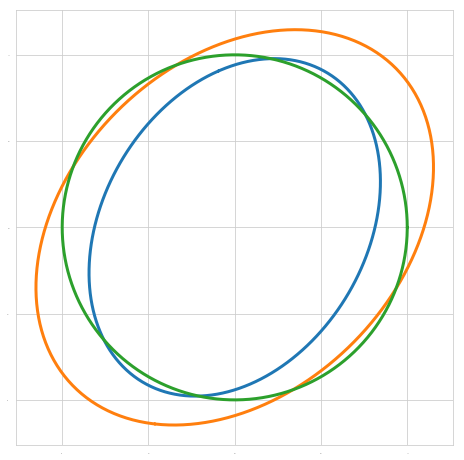}
  }
  \caption{Product}
\end{subfigure}
\begin{subfigure}{0.32\linewidth}
  \centerline{
  \includegraphics[width=\columnwidth]{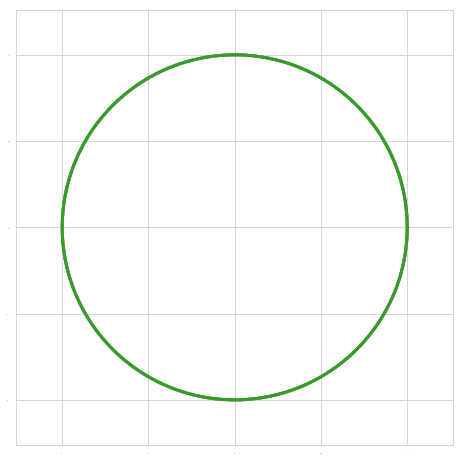}
  }
  \caption{Sum}
\end{subfigure}
\caption{Image of the unit circle (green) under $A$ (blue), $B$ (orange), and $AB$ (green) from (\ref{transformations A and B}). Non-orthogonal transformations deform the circle to an ellipse; orthogonal transformations preserve the circle.}
\label{unit circle}
\end{center}
\vskip -0.2in
\end{figure}

\subsection{MNIST}
\label{mnist experiments}
In the following experiment, the data set $X \in \mathbb{R}^{784 \times 10000}$ is the test set of the MNIST handwritten digit database \citep{lecun}. We train an LAE with $k=9$ and $\lambda=10$ for each loss, again using the Adam optimizer for $100$ epochs with random normal initialization, batch size of $32$, and learning rate $0.05$.

Figure \ref{weights} further illustrates the Landscape Theorem \ref{unified statement} by reshaping the left singular vectors of the trained decoder ${W_2}_*$ and the top $k$ principal direction of $X$ into $28 \times 28$ greyscale images. Indeed, only the decoder from the LAE trained on the sum loss has left singular vectors that match the principal directions up to sign.

\begin{figure}[tb]
\begin{center}
\begin{subfigure}{0.44\linewidth}
  \centerline{
  \includegraphics[width=0.85\columnwidth]{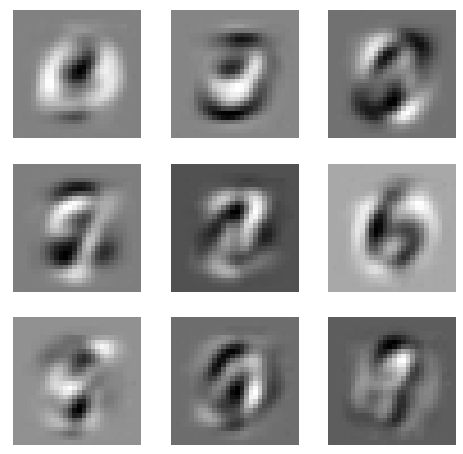}
  }
  \caption{Unregularized}
\end{subfigure}
\begin{subfigure}{0.44\linewidth}
  \centerline{
  \includegraphics[width=0.85\columnwidth]{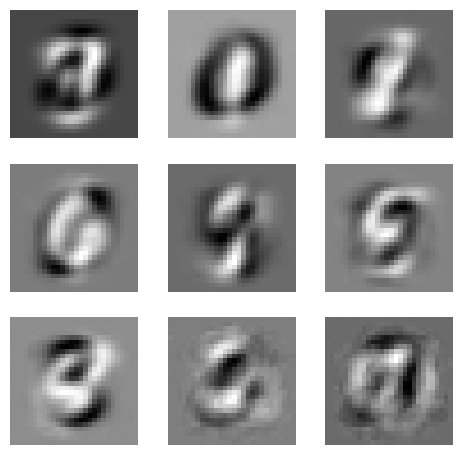}
  }
  \caption{Product}
\end{subfigure}
\begin{subfigure}{0.44\linewidth}
  \centerline{
  \includegraphics[width=0.85\columnwidth]{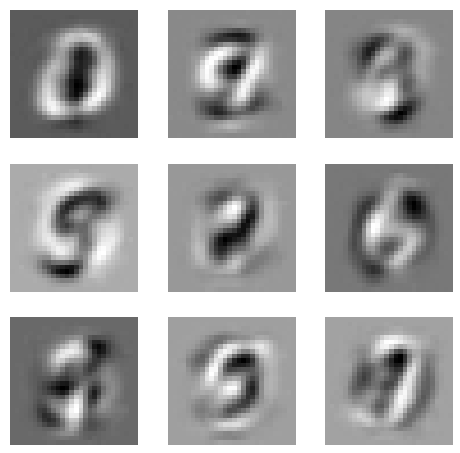}
  }
  \caption{Sum}
\end{subfigure}
\begin{subfigure}{0.44\linewidth}
  \centerline{
  \includegraphics[width=0.85\columnwidth]{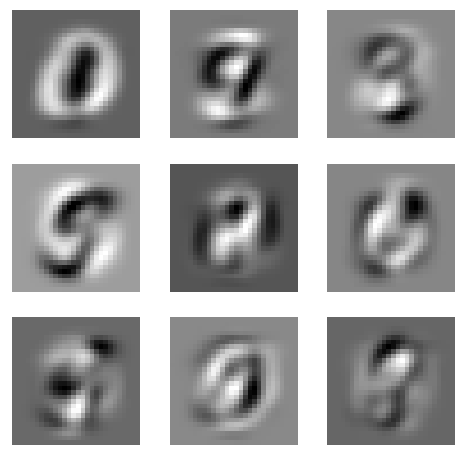}
  }
  \caption{PCA}
\end{subfigure}
\caption{Left singular vectors of the decoder from an LAE trained on unregularized, product, and sum losses and the principal directions of MNIST reshaped into images.}
\label{weights}
\end{center}
\vskip -0.2in
\end{figure}

As described in Section \ref{bayesian models}, for an LAE trained on the sum loss, the latent representation is, up to orthogonal transformation, the principal component embedding compressed along each principal direction. We illustrate this in Figure \ref{embedding} by comparing the $k=2$ representation to that of PCA.

\begin{figure}[tb]
\begin{center}
\begin{subfigure}{0.44\linewidth}
  \centerline{
  \includegraphics[width=\columnwidth]{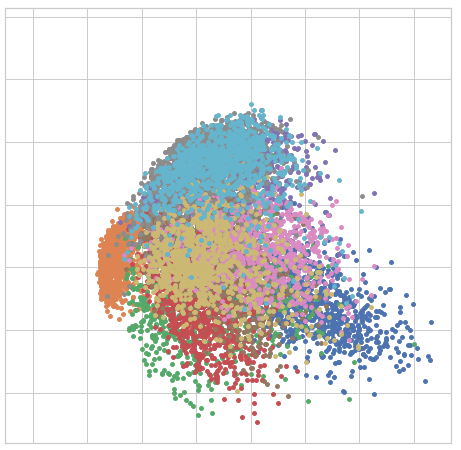}
  }
  \caption{Sum}
\end{subfigure}
\begin{subfigure}{0.44\linewidth}
  \centerline{
  \includegraphics[width=\columnwidth]{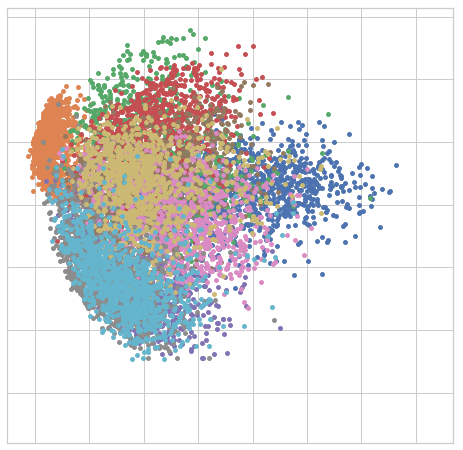}
  }
  \caption{PCA}
\end{subfigure}
\begin{subfigure}{0.10\linewidth}
  \centerline{
  \includegraphics[width=\columnwidth]{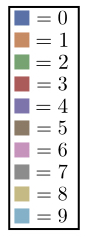}
  }
  \caption*{}
\end{subfigure}
\caption{Latent representations of MNIST learned by an LAE with sum loss and by PCA. Colors represent class label.}
\label{embedding}
\end{center}
\vskip -0.2in
\end{figure}

\end{section}

\begin{section}{Morse homology of the real Grassmannian}
\label{morse}

This section embraces the language and techniques of differential and algebraic topology to dive into the topology underlying LAEs. To complement Wikipedia, the following resources cover the italicized terminology in depth: \citep{milnor, hatcher,banyaga}.

Let $M$ be a smooth, compact manifold. In this section, we prove the Grassmannian Theorem through the lens of \textit{Morse theory}, a subfield of differential and algebraic topology that relates the topology of $M$ to smooth functions $f\colon M \to \mathbb{R}$.

A critical point of $f$ is \textit{non-degenerate} if the eigenvalues of the Hessian are non-zero. A \textit{Morse function} is a smooth function all of whose critical points are non-degenerate. Morse functions are generic and stable; see Section 4.2 of \citet{bloomthesis} for precise statements.

The \textit{Morse index} $d$ of a critical point is the number of negative eigenvalues of the Hessian. At each index-$d$ non-degenerate critical point, one can choose a local coordinate system under which the function takes the form
$$-x_1^2 - \ldots - x_d^2 + x_{d+1}^2 + \ldots + x_m^2.$$
Hence $d=0$ and $d=m$ correspond to parabolic minima and maxima, respectively, which all other values of $d$ correspond to saddles with, in local coordinates, $d$ orthogonal descending directions and $m-d$ orthogonal ascending directions. For example, the red, blue, and green critical points in Figure \ref{grassman} have Morse indices 0, 1, and 2, respectively.

The \textit{Morse inequalities} state that any Morse function on $M$ must have at least as many index-$d$ critical points as the \textit{Betti number} $b_d$, i.e.\ the rank of the \textit{singular homology group} $H_d(M; \mathbb{Z})$. This follows from a realization, called \textit{Morse homology}, of singular homology as the homology of a chain complex generated in dimension $d$ by the index-$d$ critical points. The boundary map $\partial$ counts negative gradient trajectories between critical points of adjacent index. A Morse function is {perfect} if this signed count is always zero, in which case $\partial$ vanishes. A Morse function is $\mathbb{F}_2$\textit{-perfect} if this count is always even, in which case $\partial$ vanishes over the field of two elements.

Not all smooth manifolds admit perfect Morse functions. For example, the projective plane $\mathbb{RP}^2 \cong \mathrm{Gr}_1(\mathbb{R}^3)$ cannot since $H_1(\mathbb{RP}^2; \mathbb{Z}) \cong \mathbb{F}_2$ implies that $\partial$ is non-zero. The \textit{Poincar\'e homology sphere} is a famous example of a manifold without a perfect Morse over $\mathbb{Z}$ or any field\footnote{This is because if a homology 3-sphere admits a perfect Morse function, then it consists of a 3-cell attached to a 0-cell, and is therefore the 3-sphere. Similarly, the \textit{smooth 4-dimensional Poincar\'e conjecture} holds if and only if every smooth 4-sphere admits a perfect Morse function. This conjecture, whose resolution continues to drive the field, states that there is only one smooth structure on the topological 4-sphere.}.

The Grassmannian $\mathrm{Gr}_k(\mathbb{R}^m)$ provides a coordinate-free representation of the space of rank-$k$ orthogonal projections, a submanifold of $\mathbb{R}^{m \times m}$. Through the identification of a projection with its image, $\mathrm{Gr}_k(\mathbb{R}^m)$ is endowed with the structure of a smooth, compact Riemannian manifold of dimension $k(m-k)$.

\begin{theorem}
$\mathcal{L}_X$ is an $\mathbb{F}_2$\textit{-perfect} Morse function. Its critical points are the rank-$k$ principal subspaces.
\end{theorem}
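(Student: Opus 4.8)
The plan is to derive this stronger statement from the already-established Grassmannian Theorem~\ref{perfect} together with the classical mod-$2$ homology of the real Grassmannian. Theorem~\ref{perfect} gives directly that $\mathcal{L}_X$ is a Morse function whose critical points are exactly the rank-$k$ principal subspaces, with precisely one critical point of Morse index $d_\mathcal{I}$ as in~\eqref{morseindex} for each size-$k$ index set $\mathcal{I}=\{i_1<\dots<i_k\}\subset\{1,\dots,m\}$; so only $\mathbb{F}_2$-perfectness remains. Let $c_d$ be the number of such $\mathcal{I}$ with $d_\mathcal{I}=d$ and let $b_d=\dim_{\mathbb{F}_2}H_d(\mathrm{Gr}_k(\mathbb{R}^m);\mathbb{F}_2)$. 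I claim it suffices to show $c_d=b_d$ for every $d$. Fix a Morse--Smale metric so the mod-$2$ Morse chain complex $C_\bullet$ is defined; its term $C_d$ has $\mathbb{F}_2$-dimension $c_d$ (which depends only on $\mathcal{L}_X$), and $H_\bullet(C_\bullet)\cong H_\bullet(\mathrm{Gr}_k(\mathbb{R}^m);\mathbb{F}_2)$. Writing $r_d$ for the rank of $\partial_d\colon C_d\to C_{d-1}$ over $\mathbb{F}_2$, we have $b_d=c_d-r_d-r_{d+1}$; summing over the finitely many nonzero terms (with $r_0=0$) gives $\sum_d b_d=\sum_d c_d-2\sum_d r_d$, so the hypothesis $c_d=b_d$ for all $d$ forces $\sum_d r_d=0$, hence each $r_d=0$ and $\partial_\bullet\equiv 0$ over $\mathbb{F}_2$. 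That is exactly what it means for $\mathcal{L}_X$ to be $\mathbb{F}_2$-perfect (and incidentally shows this does not depend on the chosen metric).

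It thus remains to match $c_d$ with $b_d$, and both sides are computed by the Gaussian binomial coefficient $\binom{m}{k}_q=\prod_{i=1}^{k}\frac{1-q^{m-k+i}}{1-q^{i}}$. On the combinatorial side, $\mathcal{I}=\{i_1<\dots<i_k\}\mapsto\lambda=(i_k-k,\,i_{k-1}-(k-1),\,\dots,\,i_1-1)$ is a bijection from the size-$k$ subsets of $\{1,\dots,m\}$ onto partitions fitting inside a $k\times(m-k)$ rectangle, carrying $d_\mathcal{I}$ to $|\lambda|$, so $\sum_d c_d q^d=\binom{m}{k}_q$. On the topological side, this same polynomial is the mod-$2$ Poincar\'e polynomial of $\mathrm{Gr}_k(\mathbb{R}^m)$: the Grassmannian carries the classical Schubert-cell CW structure with one $d$-cell for each $\mathcal{I}$ with $d_\mathcal{I}=d$, and since $H_\bullet(\mathrm{Gr}_k(\mathbb{R}^m);\mathbb{F}_2)$ has total dimension $\binom{m}{k}$, the cellular differential must vanish mod $2$, giving $b_d=\#\{d\text{-cells}\}=c_d$ (a classical computation; see \citet{hatcher}). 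This finishes the proof, the description of the critical set being immediate from Theorem~\ref{perfect}.

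I expect the classical mod-$2$ cohomology computation for the Grassmannian to be the input I lean on most heavily. One could instead try to keep everything intrinsic by identifying the unstable manifolds of $-\nabla\mathcal{L}_X$ (with respect to the $\mathrm{O}_m(\mathbb{R})$-invariant metric, or the metric pulled back through the commutative diagram relating $\mathcal{L}_X$ to the coordinate loss $\mathcal{L}$) with Schubert cells and checking the attaching maps, but this essentially reconstructs the known answer. I would also emphasize that passing to $\mathbb{F}_2$ coefficients is forced rather than cosmetic: already for $k=1$, $m=3$, $\mathcal{L}_X$ is a Morse function on $\mathrm{Gr}_1(\mathbb{R}^3)\cong\mathbb{RP}^2$ with one critical point of each index $0,1,2$, whereas $H_1(\mathbb{RP}^2;\mathbb{Z})\cong\mathbb{F}_2$ has rank $0$ (and $H_\bullet(\mathbb{RP}^2;\mathbb{Q})$ is trivial in positive degrees), so the integral Morse boundary operator cannot vanish and $\mathcal{L}_X$ is not a perfect Morse function over $\mathbb{Z}$ or $\mathbb{Q}$.
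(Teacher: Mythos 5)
Your argument is mathematically sound, but it takes a genuinely different route from the paper on the key point, and it has one structural wrinkle that needs patching. The wrinkle: you cannot simply cite Theorem~\ref{perfect} for the Morse-function and critical-point claims, because the paper never proves that theorem independently --- the main text explicitly defers the Grassmannian Theorem to this appendix, so the statement you are proving here \emph{is} its proof. The paper instead derives the critical set and non-degeneracy from the Landscape Theorem~\ref{unified statement} via the commutative diagram~\eqref{commutegrass}: since $\iota$ is an immersion and $\pi$ a submersion, the critical points of $\mathcal{L}_X$ are the images under $\pi$ of the critical $k$-frames of $\mathcal{L}$, and non-degeneracy is checked by exhibiting all $k(m-k)$ independent rotation directions. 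You allude to this diagram at the end, so the fix is routine, but as written your first step is circular within the paper's logical structure. The more interesting divergence is in how $\mathbb{F}_2$-perfectness is obtained. Your reduction $b_d = c_d - r_d - r_{d+1}$, hence $\sum_d c_d = \sum_d b_d \Rightarrow \partial \equiv 0$, is correct, as is the subsets-to-partitions bijection matching $\sum_d c_d q^d$ with the Gaussian binomial; you then close the argument by importing the classical computation $\dim_{\mathbb{F}_2} H_*(\mathrm{Gr}_k(\mathbb{R}^m);\mathbb{F}_2) = \binom{m}{k}$. The paper goes the other way: it counts negative gradient trajectories directly --- between critical points of adjacent index there are exactly two, namely the two senses of rotating $u_i$ toward $u_j$ in the plane they span --- so the boundary map vanishes mod $2$ with no prior knowledge of the homology of the Grassmannian. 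The trade-off is real: your proof is shorter given the classical input (and your remark that the conclusion is independent of the chosen Morse--Smale metric is a nice bonus), but it forfeits the point the paper is advertising, namely a direct, self-contained construction of an $\mathbb{F}_2$-perfect Morse function that \emph{rederives} the mod-$2$ homology and the Schubert cell structure of $\mathrm{Gr}_k(\mathbb{R}^m)$ as output rather than taking them as input. Your closing observation about $\mathbb{RP}^2$ and the failure of perfectness over $\mathbb{Z}$ is correct and consistent with the paper's own discussion.
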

\begin{proof}
Consider the commutative diagram
\begin{equation}
\begin{tikzcd}[column sep = 5em]
\label{commutegrass}
    \mathrm{V}_k(\mathbb{R}^m) \arrow[r, "{\pi: O \mapsto \mathrm{Im}(OO^\intercal)}", two heads] \arrow[d, "{\iota: O \mapsto (O^\intercal,  O)}"', hook] &
    \mathrm{Gr}_k(\mathbb{R}^m) \arrow[d, "\mathcal{L}_X"] \\
    \mathbb{R}^{k\times m} \times \mathbb{R}^{m\times k} \arrow[r, "\mathcal{L}"'] &
    \mathbb{R}
\end{tikzcd}
\end{equation}
where $\mathrm{V}_k(\mathbb{R}^m)$ is the \textit{Stiefel manifold} of $m \times k$ matrices with orthonormal columns. Since $\iota$ is an immersion, by Theorem \ref{unified statement} the critical points of $\mathcal{L} \circ \iota = \mathcal{L}_X \circ \pi$ are all $k$-frames spanning principal subspaces of $X$. Since $\pi$ is a submersion, the critical points of $\mathcal{L}_X$ are the image of this subset under $\pi$ as claimed.

Each critical point (that is, rank-$k$ principle subspace) is non-degenerate because each of the included $k$ principal directions may be rotated toward any of the excluded $m-k$ principal directions in the plane they span, fixing all other principal directions; this accounts for all $k(m-k)$ dimensions. Flowing from higher to lower eigenvalues, these rotations are precisely the $-\nabla\mathcal{L}_X$ trajectories between adjacent index critical points. Since there are exactly two directions in which to rotate, we conclude that $\mathcal{L}_X$ is $\mathbb{F}_2$-perfect.
\end{proof}

While this paper may be the first to directly construct an $\mathbb{F}_2$-perfect Morse function on the real Grassmannian, the existence of some $\mathbb{F}_2$-perfect Morse function is straightforward to deduce from the extensive literature on perfect Morse functions on complex Grassmanians \cite{Hansen,duan}. Our simple and intuitive function is akin to that recently established for the special orthogonal group \citep{solgun}.

Note that $\mathcal{L}_X$ is invariant to replacing $X = U\Sigma V^\intercal$ with $U\Sigma$ and therefore doubles by replacing $X$ with the $2m$ points bounding the axes of the principal ellipsoid of the covariance of $X$. Rotating by $U^\intercal$, one need only consider the data set $$\{(\pm\sigma_1^2, 0, \ldots, 0), (0, \pm\sigma_2^2, \ldots, 0), \ldots, (0, 0, \ldots, \pm\sigma_m^2)\}$$ to appreciate the symmetries, dynamics, and critical values of the gradient flow in general.

We encourage the reader to check the Morse index formula \eqref{morseindex} in the case of $\mathrm{Gr}_2(\mathbb{R}^4)$ in the table below. The symmetries of the table reflect the duality between a plane and its orthogonal complement in $\mathbb{R}^4$.
\begin{table}[H]
\centering
\begin{tabular}{|c|c|c|c|c|}
    \hline 
    $d$ & $u_1$ & $u_2$ & $u_3$ & $u_4$ \\\hline\hline
    4 & & & $\bullet$ & $\bullet$ \\\hline
    3 & & $\bullet$ & & $\bullet$ \\\hline
    2 & & $\bullet$ & $\bullet$ & \\\hline
    2 & $\bullet$ & & & $\bullet$ \\\hline
    1 & $\bullet$ & & $\bullet$ &\\\hline
    0 & $\bullet$ & $\bullet$ & & \\\hline
\end{tabular}
\end{table}

Theorem \ref{grassman} implies that $\mathcal{L}_X$ endows $\mathrm{Gr}_k(\mathbb{R}^m)$ with the structure of a \textit{CW complex}; each index-$d$ critical point $P$ is the maximum of a $d$-dimensional cell consisting of all points that asymptotically flow up to $P$. This decomposition coincides with the classical, minimal CW construction of Grassmannians in terms of \textit{Schubert cells}. Over $\mathbb{Z}$, pairs of rotations have the same sign when flowing from even to odd dimension, and opposite signs when flowing from odd to even dimension, due to the oddness and evenness of the antipodal map on the boundary sphere, respectively. In this way, Morse homology for $\mathcal{L}_X$ realizes the same chain complex as \textit{CW homology} on the Schubert cell structure of the real Grassmannian.

\subsection{Morse homology and deep learning}

We have seen how the rich topology of the real Grassmannian forces any generic smooth function to have at least $\binom{m}{k}$ critical points. More interestingly from the perspective of deep learning, Morse homology also explains why simple topology forces critical points of any generic smooth function to ``geometrically cancel'' through gradient trajectories. As an intuitive example, consider a generic smooth function $f: \mathbb{R} \to \mathbb{R}$ that is strictly decreasing for $x < a$ and strictly increasing for $x > b$ for some $a < b$. Then on $[a, b]$, $f$ wiggles up and down, alternating between local minima and maxima, with pairwise gradient cancellation leaving a single minimum.

More generally, for a generic smooth loss function over the connected parameter space $\mathbb{R}^p$, diverging strictly to infinity outside of a compact subset, each pair of minima is linked by a path of gradient trajectories between minima and index-1 saddles\footnote{For example, in Figure \ref{one dimensional}(c), the red minima are each connected to the yellow saddle by one gradient trajectory.}. In fact, since $\mathbb{R}^p$ is contractible, we can flow upward along gradient trajectories from one minimum to all minima through index-1 saddles, from those index-1 saddles to other index-1 saddles through index-2 saddles, and so on until the resulting chain complex is contractible. Note there may exist additional critical points forming null-homotopic chain complexes.

The contractible complex containing the minima is especially interesting in light of \citet{Choromanska}. For large non-linear networks under a simple generative model of data, the authors use random matrix theory\footnote{From this perspective, the LAE is a toy model that more directly bridges loss landscapes and random matrix theory; the heights of critical points are sums of eigenvalues.} to prove that critical points are layered according to index: local minima occur at a similar height as the global minimum, index-1 saddles in a layer just above the layer of minima, and so on. Hence Morse homology provides a principled foundation for the empirical observation of low-lying valley passages between minima used in Fast Geometric Ensembling (FGE) \citep{Garipov}.

In FGE, after descending to one minimum, the learning rate is cycled to traverse such passages and find more minima. While the resulting ensemble prediction achieves state-of-the-art performance, these nearby minima may correspond to models with correlated error. With this in mind, we are exploring whether ensemble prediction is improved using less correlated minima, and whether many such minima may be found with logarithmic effort by recursively bifurcating gradient descent near saddles to descend alongside the Morse complex described above.
\end{section}

\newpage

\section*{Acknowledgements}

We thank Mehrtash Babadi, Daniel Bear, Yoshua Bengio, Alex Bloemendal, Bram Gorissen, Matthew J. Johnson, Scott Linderman, Vardan Papyan, Elad Plaut, Tomaso Poggio, and Patrick Schultz for helpful discussions.



\nocite{Bengio}

\bibliography{sample}
\bibliographystyle{icml2019}





\end{document}